\theoremstyle{plain}
\newtheorem{theorem}{Theorem}[section]
\newtheorem{proposition}[theorem]{Proposition}
\newtheorem{lemma}[theorem]{Lemma}
\theoremstyle{definition}
\theoremstyle{remark}
\newcommand{\pss}{\mathcal{Z}_\mathrm{+}}
\newcommand{\nss}{\mathcal{Z}_\mathrm{-}}
\newcommand{\sz}{\mathcal{Z}}
\newcommand{\ps}{z_\mathrm{+}}
\newcommand{\ns}{z_\mathrm{-}}
\newcommand{\params}{\theta}
\newcommand{\prs}{\Theta}
\newcommand{\grad}{g}
\newcommand{\gest}{\hat{g}}
\newcommand{\gp}{g_\mathrm{+}}
\newcommand{\gn}{g_\mathrm{-}}
\newcommand{\prv}{B}
\newcommand{\prvp}{b}
\newcommand{\x}{v}
\newcommand{\data}{x}
\newcommand{\h}{h}
\newcommand{\lr}{\eta}
\newcommand{\pl}{T}
\newcommand{\trv}{\omega}
\newcommand{\trvp}{\mu}
\newcommand{\tp}{g_\mathrm{+}(\params, \ps)}
\newcommand{\tn}{g_\mathrm{-}(\params, \ns)}
\newcommand{\as}{\mathcal{Z}}
\newcommand{\ac}{z}
\newcommand{\p}{p}
\newcommand{\gs}{G}
\newcommand{\rn}{\tau}
\newcommand{\extra}{\Gamma}
\newcommand{\transient}{s}
\newcommand{\fp}{f_+(\params, \ps)}
\newcommand{\fn}{f_-(\params, \ns)}
\newcommand{\sigp}{\Sigma_+}
\newcommand{\sign}{\Sigma_-}
\newcommand{\sigpn}{\Sigma_{+-}}
\newcommand{\mup}{\mu_+}
\newcommand{\mun}{\mu_-}
\newcommand{\vi}{j}
\newcommand{\tac}{\tilde{\ac}}
\newcommand{\pls}{T_s}
\newcommand{\CO}{\mathcal{O}}
\newcommand{\CC}{\mathcal{C}}
\newcommand{\dee}{\mathrm{d}}
\newcommand{\E}{\mathbb{E}}
\icmltitlerunning{Flexible Phase Dynamics for Bio-Plausible Constrastive Learning}
\begin{document}

\twocolumn[
\icmltitle{Flexible Phase Dynamics for Bio-Plausible Contrastive Learning}




\begin{icmlauthorlist}
\icmlauthor{Ezekiel Williams}{aff1,aff2}
\icmlauthor{Colin Bredenberg}{aff1,aff2}
\icmlauthor{Guillaume Lajoie}{aff1,aff2}
\end{icmlauthorlist}

\icmlaffiliation{aff1}{Department of Mathematics and Statistics, Universite de Montreal, Quebec, Canada}
\icmlaffiliation{aff2}{Mila, Quebec AI Institute, Quebec, Canada}

\icmlcorrespondingauthor{Ezekiel Williams}{ezekiel.williams@mila.quebec}
\icmlcorrespondingauthor{Guillaume Lajoie}{g.lajoie@umontreal.ca}

\icmlkeywords{Machine Learning, Neuroscience, Computational Neuroscience, Probabilistic Models, Energy Based Models, Contrastive Learning, Biologically Plausible Learning, Neuromorphic Computing}

\vskip 0.3in
]



\printAffiliationsAndNotice{}  

\begin{abstract}
Many learning algorithms used as normative models in neuroscience or as candidate approaches for learning on neuromorphic chips learn by contrasting one set of network states with another. These Contrastive Learning (CL) algorithms are traditionally implemented with rigid, \emph{temporally non-local}, and periodic learning dynamics that could limit the range of physical systems capable of harnessing CL. In this study, we build on recent work exploring how CL might be implemented by biological or neurmorphic systems and show that this form of learning can be made temporally local, and can still function even if many of the dynamical requirements of standard training procedures are relaxed. Thanks to a set of general theorems corroborated by numerical experiments across several CL models, our results provide theoretical foundations for the study and development of CL methods for biological and neuromorphic neural networks.
\end{abstract}

\section{Introduction}
\emph{Contrastive learning} (CL) represents a powerful family of algorithms for learning representations, spanning a broad range of machine learning methodologies including generative and discriminative modelling, as well as supervised and unsupervised learning. Broadly, CL can be divided into \emph{equilibrium-based} methods, that use a (stochastic) dynamical process to calculate internal network states, and non-equilibrium methods, that calculate internal states directly in a single pass through the network. CL has been proposed as a normative model for biological learning \cite{baldi1991contrastive,illing2021local,hinton1984boltzmann,scellier2017equilibrium,cao2020characterizing}, but requires highly structured training dynamics that restrict the kinds of biological systems that could implement this form of learning. 
Here, we investigate alternative training dynamics for CL, to better understand the breadth of systems capable of implementing it. 

CL learns representations by leveraging the statistical differences between \emph{positive samples} (samples from the training set) and \emph{negative samples} (artificially or internally generated data). We restrict our study to the subclass of CL algorithms with a two-term gradient, where one term depends on positive but not negative samples, and the other term on negative but not positive. This \emph{two-term} CL class comprises a wide range of algorithms, including most algorithms of interest to the neuroscience community. For example, the Boltzmann machine \cite{ackley1985learning}, equilibrium propagation \cite{scellier2017equilibrium}, and energy based models optimizing the negative log-likelihood \cite{lecun2006tutorial} all fall within this subclass. Conversely, SimCLR \cite{chen2020simple} is an example of a non-two-term CL method. 

Two-term CL algorithms have a periodic, biphasic time course of learning. First, during the \emph{positive phase}, the model calculates internal states given a positive sample. This is followed by a \emph{negative phase} where neural responses to a negative sample are computed. For example, in a Boltzmann machine \cite{ackley1985learning} the positive phase calculates network states conditioned on data while the negative phase calculates internally generated states free from data conditioning; in noise contrastive estimation, the positive phase calculates network states in response to real data while the negative phase calculates responses to noise \cite{gutmann2010noise}. Finally, network responses from each phase are used to calculate the gradient and a single step of gradient descent is taken before repeating the whole process for the next learning iteration \cite{ackley1985learning, scellier2017equilibrium, hinton2022forward} (Fig.\ref{fig1}.A). As such, CL suffers from constraints on training dynamics that restrict the set of physical neural networks capable of running the algorithm. In this work, we address four of these constraints and show they can be considerably relaxed:
\begin{enumerate}
    \item Temporal non-locality: the use of information from past network states that is too old to be available to the network without auxiliary memory mechanisms 
    \item Strict phasic periodicity: oscillation between two distinct phases
    \item Rapidly tuneable learning rates: precise modulation of network plasticity during training
    \item Deterministic phase length: keeping the same phase length for all phases during training
\end{enumerate}

\paragraph{Contributions:}
In this paper we address temporal non-locality and periodicity for all two-term CL methods, and investigate how variability of phase length and limited learning rate modulation affect equilibrium-based CL. We propose an importance sampling-inspired approach to estimating the gradient that makes two-term CL methods temporally local and show it is unbiased (see Appendix \S \ref{sec:bias-variance}). It does so by stochastically selecting either a single positive or single negative sample to learn from at each gradient step, thus also eliminating the strict periodicity seen in past algorithms. The proposed approach provides a principled way to tune the amount of time spent taking gradient steps with respect to positive versus negative samples. Interestingly, we find that it is not always optimal to spend an equal amount of training time learning from positive compared to negative samples. Next, we demonstrate quite generally that equilibrium-based methods, and thus equilibrium-based CL, can still learn effectively--with asymptotically unbiased gradient estimates--even with no learning rate modulation (i.e. avoiding the wait until the end of phases to update parameters) and with a significant amount of noise in the length of each phase. Taken together, our results formally relax the previous requirements on the learning dynamics of many CL algorithms, thereby demonstrating that a broader set of systems might be capable of implementing this form of learning.

\section{Background}

\paragraph{Bio-Plausible Learning:}
How the brain learns is a fundamental scientific question driving research not only in neuroscience but also in AI. In particular, AI algorithms are being used more and more as \emph{normative models} of learning in biological neural networks, a line of study that has already yielded exciting results \cite{richards2019deep, sorscher2022unified, hennequin2018dynamical, payeur2021burst}. However, many AI algorithms exhibit properties that go beyond gross abstractions of neural processing to be computationally incompatible with neural hardware. This means that an important component of AI-inspired modelling in neuroscience is teasing out which aspects of a candidate learning model cannot be implemented by the brain and whether the model might be modified so as to make it more plausibly implementable by biological circuits (\emph{bio-plausible}). For example, backpropagation computes using biologically implausible mechanisms such as \emph{spatial non-locality} \cite{diederich1987learning} and \emph{weight-transport} \cite{lillicrap2016random}.

The study of bio-plausible learning is not solely applicable to neuroscience but also has relevance to AI in itself, as algorithms that function like the brain often have computational benefits \cite{lake2017building}. An especially important example of this synergy between bio-plausible learning and AI development is neuromorphic computing: neuromorphic hardware represents a potential solution to AI energy efficiency problems \cite{strubell2020energy}, but often puts similar constraints on algorithms that biological systems do, for example utilizing local storage of memory \cite{schuman2022opportunities, frenkel2021bottom}.

\paragraph{Contrastive Learning and Bio-Plausibility:}
Often the positive and negative phases of CL are thought of, from a psychological modelling perspective, as being implemented during wakeful processing of stimuli and periods of sleep, respectively \cite{hinton2022forward, crick1983function}, just as in the Wake-Sleep algorithm (another classic learning model) \cite{dayan1995helmholtz}. Other studies have related these phases to oscillations in cortical circuits \cite{baldi1991contrastive} or have suggested that the difference between phases could be delineated by saccades in the visual system \cite{illing2021local}. Given a lack of hard evidence in favour of these hypotheses, we will refer to these phases solely as positive and negative in what follows.

A subset of CL algorithms do not require backpropagation for neural network-based learning \cite{ackley1985learning, scellier2017equilibrium, hinton2022forward}; it is this feature in particular that has brought attention to CL methods as bio-plausible learning models. However, the learning dynamics for CL methods exhibit properties that have not been observed in the brain.

To begin with, the majority of CL methods are \emph{temporally non-local}. This refers to when an algorithm uses information that is not available within some short distance of the present timestep to update the parameters of the network at the current timestep \cite{illing2021local}. For CL methods, temporal non-locality appears because network responses to positive \emph{and} negative samples must be calculated before performing a parameter update (e.g. Fig.\ref{fig1}.A). Notably, this also introduces a precise positive-negative phase period during the learning time course. The problem with temporal non-locality is that it would require a neural network to save its neuron states in a memory buffer for a non-negligible amount of time before recalling them. Memory storage could be solved in theory by synaptic weight consolidation (introduce a candidate modification in phase 1, and consolidate the change in phase 2), but there's no known relationship between this phenomenon and biphasic network dynamics in the brain.

\paragraph{Equilibrium-Based Contrastive Learning:}
Beyond the problems of temporal non-locality and periodicity, two additional complications exist for \emph{equilibrium-based CL} methods used to train (stochastic) dynamical system neural networks. For each data point observed during training, equilibrium-based CL must run internal dynamics on the space of the given model's hidden unit activations until the dynamics have converged to the vicinity of a fixed point. This means that, for each positive and negative sample pair, learning rates must be set to zero and dynamics run two times before learning rates are turned-on for a single parameter update \cite{ackley1985learning, scellier2017equilibrium, hinton2022forward} (Fig.\ref{fig1}.A). Such a specific pattern of learning rate scheduling and periodic stimulus/sample presentation is as yet unsupported empirically in biological systems: this begs the question whether aspects of these training dynamics can be relaxed. Lastly, because of the convergence requirements on the dynamics of equilibrium-based learning, when training AI methods one usually fixes a particular length of time to run dynamics for each sample. Comparatively, regardless of the proposed implementation of biphasic learning (sleep/wake, neural oscillations, saccades, active/quiescent) animals usually spend a variable amount of time in each state. This motivates the question of how much variability equilibrium-based CL algorithms might tolerate in the length of phases.

\paragraph{Example CL system---The Boltzmann Machine:}

To ground this discussion of biphasic learning, we will provide a brief example of a classic equilibrium-based CL method: the Boltzmann machine \cite{ackley1985learning}. The Boltzmann machine is a generative Recurrent Neural Network (RNN) model that learns a probability distribution over a $d$ dimensional binary support from a dataset $\{\data^{(i)}\}_{i=1}^N$, $\data^{(i)} \in \{0, 1\}^d$. The RNN units $\ac \in \{0,1\}^n$ are partitioned into visible units, $\x_{j}$ $j \in \{0, \dots d\}$, whose learned activity represents samples from the $d$ dimensional support, and hidden units $\h_{i}$, $i \in \{0, \dots, n_h\}$, that encode higher order interactions between the visible units. Then $\ac \in \{0, 1\}^{d + n_h}$ with $\ac = [\x, \h]$. The units are connected via a synaptic weight matrix $\params \in \mathbb{R}^{n\times n}$ that learns to encode the correlation structure between units via the following contrastive Hebbian learning rule:
\begin{align}
    \Delta \params_{kl} = \lr \ac_{k,+}\ac_{l,+} - \lr \ac_{k,-}\ac_{l,-},
\end{align}

where the `$+$' subscript denotes an activation generated from $K \in \mathbb{N}$ steps of dynamics with the visible units clamped to a data point (a positive phase) and the `$-$' subscript denotes an activation generated from $K$ steps of recurrent dynamics with the visible units unclamped (a negative phase). The weight update finally occurs after both phases have been run and requires the temporally non-local saving of activations from the first phase. Learning rates are set to zero while the phase dynamics are run and only turned on for the weight update. These periodic learning dynamics are referred to as $K$-step Contrastive Divergence (CDK) \cite{hinton2002training}, and are visualized in Fig.\ref{fig1}.A. While CDK is not the only algorithm for training a Boltzmann machine, it is typical of the training dynamics used by other methods \cite{fischer2014training}.


\paragraph{Related Work:}

Here, we non-exhaustively review approaches to learning that are equilibrium-based but not contrastive, then contrastive non-equilibrium methods, and finally the intersection of these.

The majority of bio-plausbile, non-contrastive equilibrium-based methods find their roots in Predictive Coding (PC) \cite{srinivasan1982predictive, rao1999predictive}, which was retrospectively shown to be a variational method \cite{bishop2006pattern} for learning a hierarchical Gaussian model \cite{friston2005theory}. Given the prescribed generative model, these works are more narrow in scope than ours, which explores equilibrium learning quite broadly in the context of stochastic gradient descent and fixed point latent dynamics.

There are two non-equilibrium-based CL classes of particular relevance to this study, both inspired by noise contrastive estimation \cite{gutmann2010noise}. The first class originates from Contrastive Predictive Coding (CPC) \cite{oord2018representation}, which contrasts positive samples of the future conditioned on the present with negative samples taken at random from the analyzed time series. A bio-plausible version of CPC \cite{illing2021local} uses the same approach taken in this study to arrive at temporally local and aperiodic learning, but focuses on the specific case of CPC and does not make the link with unbiased gradients and importance sampling. The second class encompasses the Forward-Forward algorithm, and related works, that do binary classification of positive from negative samples at each layer to achieve learning without backpropagation \cite{hinton2022forward, ororbia2023predictive}. To our knowledge, temporally local and aperiodic training of these algorithms has not been studied.

Equilibrium-based CL was first researched in the late 80s and early 90s, with the Boltzmann machine \cite{ackley1985learning} and its deterministic variants \cite{baldi1991contrastive, movellan1991contrastive}. Baldi \& Pineda \cite{baldi1991contrastive} explored temporally local learning in this context and, as in our work, achieve temporal locality by breaking up the two-term gradient into single positive and negative-phase terms. Unlike our work, they consider solely periodic updating---associating the learning process with cortical oscillations---and explain that learning follows gradients only in the small-learning rate limit. Temporally local learning was investigated more recently with two modifications to Equilibrium Propagation (EP) \cite{scellier2017equilibrium, ernoult2020equilibrium, laborieux2022holomorphic}. In the first \cite{ernoult2020equilibrium}, the proposed solution still requires two network passes and, like Baldi \& Pineda, relies on an infinitesimally small learning rate. This is necessary because Ernoult et al.'s solution to temporal non-locality uses continual weight updates during the positive phase. We use a similar argument, but to achieve concurrent learning and latent dynamics, rather than locality. The second study \cite{laborieux2022holomorphic} uses oscillatory clamping for training, similar to the oscillatory teacher signal used historically \cite{baldi1991contrastive}. It does so by leveraging activity in the complex plane, which provides an elegant solution for phase management but requires complex neuron activiations. This complicates the mapping to standard neuroscience and AI models which usually represent neural activity in real spaces. The second study shares another similarity with ours, in that it uses a separation of timescales argument to achieve concurrent learning and latent-space dynamics. However, the separation of timescales is left as an assumption, in contrast to our derivation of error bounds that explicitly account for the two timescales.

\section{Theoretical Results}

\begin{figure}[ht!]
\vskip 0.2in
\begin{center}
\centerline{\includegraphics[width=0.95\columnwidth]{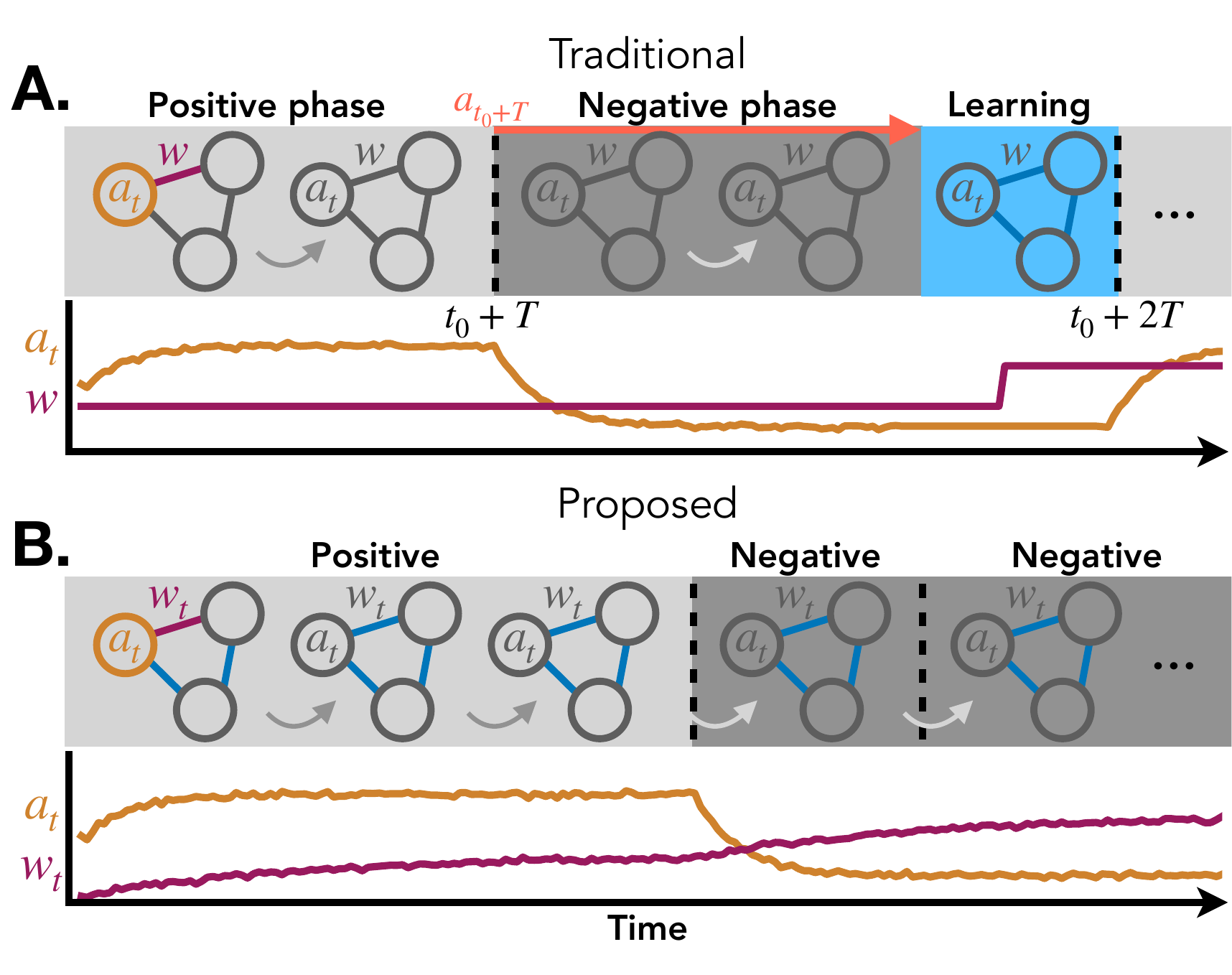}}
\caption{Comparison of learning dynamics for two algorithms. Equilibrium-based CL dynamics are shown, but the pattern of phases and temporal (non-)locality generalizes to non-equilibrium CL methods. \textbf{A.} Traditional CL using the gradient given by Equation \ref{eq1}. Network states, $a$, at the end of a positive phase (light grey) must be transported through the negative phase (dark grey) to the learning step (red arrow), introducing temporal non-locality. Learning (blue) only occurs after both phases have been completed, and phase lengths, $T$, are fixed. The plotted trajectories are temporal dynamics corresponding to the activation (orange) and network weight (purple) highlighted in the first frame of the diagram. Grey arrows denote dynamics in activation space. \textbf{B.} Equilibrium-based CL using the gradient given by Equation \ref{eq2}, and constant learning rate with stochastic phase lengths. It is temporally local because no network states must be saved in memory to enable learning, and the phase type is chosen randomly at each phase, eliminating periodicity (e.g. two negative phases in a row). There is no highlighted learning step because learning is always on, denoted by the blue weights and apparent in the constantly evolving purple weight dynamics in the bottom plot. Phase lengths are stochastic (e.g. the positive phase is 3x as long as the first negative phase). Data in the two plots is fabricated for the purpose of illustration.}
\label{fig1}
\end{center}
\vskip -0.2in
\end{figure}

We first describe our results on generalizing CL to temporally local learning dynamics, followed by results on learning rate and phase length in equilibrium-based CL systems. We begin by considering the set of CL methods that employ Stochastic Gradient Descent (SGD) to optimize a two-term objective function such that a single SGD step uses the following gradient estimate:
\begin{align}
\grad(\params) = \tp - \tn.
\label{eq1}
\end{align}

Here, $\ps \in \pss$, where $\pss$ is the set of positive-phase network states, and $\ns \in \nss$, where $\nss$ is the set of negative-phase states, $\params$ is the parameters, and $\gp$ and $\gn$ are the functions that map parameters and network states to positive and negative phase gradient terms. Temporal locality and periodicity are inherent in this gradient estimate, as it requires two different sets of network states to compute: separate passes through the neural network must be used to estimate each term of the gradient. We remark that, in a similar vein, batch learning is temporally non-local, as it requires a running sum of network weights to be held in memory during each learning phase. As such, in what follows, we study SGD (batch size of $1$). Many classic CL algorithms seen in the machine learning and computational neuroscience literature have this form of gradient, including contrastive Hebbian learning \cite{movellan1991contrastive}, equilibrium propagation \cite{scellier2017equilibrium}, maximum likelihood learning of energy based models \cite{lecun2006tutorial}, and forms of noise contrastive estimation \cite{gutmann2010noise}. The first contribution of this work is to construct a temporally local estimate of the gradient in Equation \ref{eq1}.

\subsection{Gradient Sampling to Avoid Non-Locality}
We can begin to resolve the temporal non-locality of Equation \ref{eq1} by considering the situation where each term $g_+$ and $g_-$ are used separately to do parameter updates. This means that at the end of a positive phase $g_+$ is used for an update, and $g_-$ is used similarly for the negative phase, avoiding the need for a ``memory buffer" to combine these terms. This might not be a good idea in the case of periodic phases, but it turns out that introducing stochasticity into the phase distribution proves very useful. We therefore propose a probabilistic phase selection process, where the next phase type (positive or negative) to be performed during learning is chosen by a Bernouilli random variable $\prv$. In this setting, the sum in Equation \ref{eq1} can be replaced by the following
\begin{align}
\gest(\params) = \frac{\prv}{\prvp}\tp - \frac{1 - \prv}{1 - \prvp}\tn,
\label{eq2}
\end{align}
where $\prv \sim \mathrm{Bernoulli}(\prvp)$ so that $\prvp$ is the probability of selecting a positive phase for the current SGD step. A realization of such a single, stochastically sampled term becomes a form of importance sampling \cite{robert1999monte} that estimates the true gradient from Equation \ref{eq1}. 
We refer to this gradient estimator as ISD (Importance Sampling-Discrete, given the discrete nature of the Bernoulli variable) which is subject to the following proposition (see Appendix \S \ref{sec:bias-variance} for proof).
\begin{proposition}
For $\prvp \in (0,1)$, the ISD gradient estimate of Equation \ref{eq2} is an unbiased estimate of the gradient given in Equation \ref{eq1}.
\label{prop1}
\end{proposition}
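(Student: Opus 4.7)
The plan is to show unbiasedness by computing the expectation of $\gest(\params)$ with respect to the Bernoulli variable $\prv$, holding the sampled network states $\ps$ and $\ns$ fixed (or, equivalently, conditioning on their realizations). Once the expectation over $\prv$ collapses the two importance weights, we recover the target expression from Equation \ref{eq1} exactly.

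First, I would observe that $\gest(\params)$ is linear in the two indicator-like quantities $\prv$ and $1-\prv$, with the state-dependent factors $\tp$ and $\tn$ acting as coefficients that do not depend on $\prv$. By linearity of expectation, I can therefore pull $\E$ through the subtraction and through the constant factors $1/\prvp$ and $1/(1-\prvp)$. The condition $\prvp \in (0,1)$ is precisely what makes these two normalizing constants well-defined (and it also guarantees that each phase type occurs with positive probability, so the estimator is not almost surely trivial).

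Next, since $\prv \sim \mathrm{Bernoulli}(\prvp)$, I would substitute $\E[\prv] = \prvp$ and $\E[1-\prv] = 1-\prvp$ into
\begin{equation*}
\E[\gest(\params)] \;=\; \frac{\E[\prv]}{\prvp}\,\tp \;-\; \frac{\E[1-\prv]}{1-\prvp}\,\tn,
\end{equation*}
so that the ratios cancel and the expression reduces to $\tp - \tn = \grad(\params)$. This is essentially the standard importance sampling identity, applied to the trivial two-atom proposal distribution induced by $\prv$.

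I do not foresee any real obstacle: the argument is a one-line expectation computation with two linear terms. The only point worth stating carefully is that the same calculation goes through whether we think of $\ps$ and $\ns$ as fixed realizations or as independently sampled network states, because the Bernoulli variable $\prv$ is independent of the sampling processes in $\pss$ and $\nss$ and the positive- and negative-phase samples are drawn from the same distributions as in the original two-term estimator. Consequently, the tower property yields the same conclusion when the outer expectation over the network states is taken, so $\gest$ is unbiased as an estimator of $\grad(\params)$ under Equation \ref{eq1}.
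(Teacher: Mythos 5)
Your proof is correct and follows essentially the same route as the paper's: take the expectation with respect to the Bernoulli variable $\prv$, substitute $\E[\prv]=\prvp$ and $\E[1-\prv]=1-\prvp$, and observe that the denominators cancel, recovering Equation \ref{eq1}. Your additional remarks on the independence of $\prv$ from the sampled states and the tower property are a sensible elaboration of the same one-line argument the paper gives.
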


This implies that the mean of $\gest(\params)$ is still the same as that of the original gradient. Furthermore, Equation \eqref{eq2} has the advantage of allowing flexibility in the frequency of each CL phase without sacrificing the unbiased quality. 
Indeed, in the case  where the gradient is that of a scalar-valued loss or energy function, the guaranties of convergence to a fixed point of the loss that SGD relies on still apply \cite{shamir2013stochastic, davis2019stochastic, ghadimi2016mini}. 

Crucially, we note that this gradient estimator remains unbiased for any $\prvp \in (0,1)$. However, not surprisingly, this new gradient will have higher variance in many cases, resulting in noisier estimates. The following lemmas provide some details about this variance (see Appendix \S \ref{sec:bias-variance} for proofs).
\begin{lemma}
Assume $\tp$ and $\tn$ are statistically independent and the inner product of their means is non-negative. If $\prvp \in (0,1)$, the ISD gradient estimate of Equation \ref{eq2} has strictly greater variance (quantified by the trace of the covariance of the gradient vector) than the estimate given by Equation \ref{eq1}.
\label{lem1}
\end{lemma}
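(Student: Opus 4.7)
The plan is to compute both covariance traces in closed form and exhibit their difference as a sum of manifestly non-negative quantities, at least one of which is strictly positive under the hypothesis $\mup^{\top}\mun \ge 0$. The one observation that really does work is $\prv(1-\prv)=0$ almost surely (since $\prv\in\{0,1\}$), which annihilates the cross-term when $\gest$ is squared. Combined with $\prv^2=\prv$, $(1-\prv)^2=1-\prv$, and independence of $\prv$ from the phase samples, this yields
\begin{equation*}
\E\|\gest\|^2 = \frac{1}{\prvp}\E\|\tp\|^2 + \frac{1}{1-\prvp}\E\|\tn\|^2.
\end{equation*}
Subtracting $\|\E[\gest]\|^2 = \|\mup - \mun\|^2$, available from Proposition \ref{prop1}, gives a closed form for $\mathrm{tr}\,\mathrm{Cov}(\gest)$.

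For the baseline, independence of $\tp$ and $\tn$ gives $\mathrm{Cov}(\grad) = \mathrm{Cov}(\tp) + \mathrm{Cov}(\tn)$, and hence $\mathrm{tr}\,\mathrm{Cov}(\grad) = \E\|\tp\|^2 + \E\|\tn\|^2 - \|\mup\|^2 - \|\mun\|^2$. Subtracting and collecting the mean-dependent part into a completed square, I would arrive at
\begin{equation*}
\mathrm{tr}\,\mathrm{Cov}(\gest) - \mathrm{tr}\,\mathrm{Cov}(\grad) = \tfrac{1-\prvp}{\prvp}\,\mathrm{tr}\,\mathrm{Cov}(\tp) + \tfrac{\prvp}{1-\prvp}\,\mathrm{tr}\,\mathrm{Cov}(\tn) + \left\|\sqrt{\tfrac{1-\prvp}{\prvp}}\,\mup + \sqrt{\tfrac{\prvp}{1-\prvp}}\,\mun\right\|^2.
\end{equation*}
For $\prvp\in(0,1)$, all three summands on the right are non-negative. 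The final squared norm vanishes only if $\mup$ and $\mun$ are antiparallel with precise ratio $\prvp/(1-\prvp)$, which would force $\mup^{\top}\mun \le 0$; under the hypothesis $\mup^{\top}\mun \ge 0$, this degenerate case collapses to $\mup = \mun = 0$. Strict inequality then holds whenever either $\tp$ or $\tn$ has non-trivial mean or variance, a non-degeneracy implicit in any meaningful CL setup.

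The only mildly subtle step is the completing-the-square rewriting: collecting the weighted $\|\mup\|^2$, $\|\mun\|^2$, and $2\mup^{\top}\mun$ contributions into a single squared norm is exactly where the $\mup^{\top}\mun \ge 0$ hypothesis enters cleanly, ruling out the only candidate scenario for equality. Everything else is routine bookkeeping around the Bernoulli identities $\prv^2=\prv$, $\prv(1-\prv)=0$, and around the independence of $\prv$ from the phase samples.
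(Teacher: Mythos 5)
Your proof is correct and follows essentially the same route as the paper's: compute the second moment of the ISD estimate using the Bernoulli identities $\prv^2=\prv$ and $\prv(1-\prv)=0$, subtract the squared mean, and compare against the independent-case baseline $\mathrm{Tr}[\sigp+\sign]$. The resulting difference, $\tfrac{1-\prvp}{\prvp}\mathrm{Tr}[\sigp]+\tfrac{\prvp}{1-\prvp}\mathrm{Tr}[\sign]+\tfrac{1-\prvp}{\prvp}\|\mup\|^2+\tfrac{\prvp}{1-\prvp}\|\mun\|^2+2\mup^\top\mun$, is identical in both arguments; where you diverge is the final grouping. The paper keeps the five terms separate and invokes $\mup^\top\mun\ge 0$ to sign the last one, whereas your completed square (the cross-coefficient $2\sqrt{\tfrac{1-\prvp}{\prvp}\cdot\tfrac{\prvp}{1-\prvp}}=2$ does check out) shows the three mean-dependent terms form a perfect square, so the \emph{non-strict} inequality holds with no hypothesis on $\mup^\top\mun$ at all; the hypothesis is only needed to collapse the equality case of that square to $\mup=\mun=0$. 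This buys you a cleaner treatment of strictness than the paper, whose one-line proof does not address it: as you observe, strict inequality genuinely fails in the fully degenerate case $\sigp=\sign=0$, $\mup=\mun=0$, so your non-degeneracy caveat is an honest (and necessary) qualification of the lemma as stated rather than a gap in your argument.
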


Furthermore, the variance, quantified by the trace of the covariance of the gradients, depends non-trivially on the parameter $\prvp$, making the parameter a target for optimization. To this end:

\begin{lemma}
The variance of the gradient estimator given by Equation \ref{eq2} is a convex function of $\prvp$.
\label{lem2}
\end{lemma}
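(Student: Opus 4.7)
The plan is to compute the trace of the covariance of $\gest(\params)$ as an explicit function of $\prvp$ and then observe that the $\prvp$-dependent part is a non-negative linear combination of two elementary convex functions, $1/\prvp$ and $1/(1-\prvp)$.

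First I would write the variance as $\mathrm{tr}\,\mathrm{Cov}(\gest) = \E[\|\gest\|^2] - \|\E[\gest]\|^2$, assuming (as in the preceding results) that $\prv$, $\ps$, and $\ns$ are independent. Using $\prv \sim \mathrm{Bernoulli}(\prvp)$ one has the key moments $\E[\prv/\prvp] = 1$, $\E[(1-\prv)/(1-\prvp)] = 1$, $\E[\prv^2/\prvp^2] = 1/\prvp$, $\E[(1-\prv)^2/(1-\prvp)^2] = 1/(1-\prvp)$, and crucially $\prv(1-\prv) \equiv 0$, so the cross term in $\|\gest\|^2$ vanishes in expectation. Expanding $\|\gest\|^2$ then gives
\begin{equation*}
\E[\|\gest\|^2] = \frac{1}{\prvp}\E\|\tp\|^2 + \frac{1}{1-\prvp}\E\|\tn\|^2,
\end{equation*}
while $\|\E[\gest]\|^2 = \|\E[\tp] - \E[\tn]\|^2$ has no dependence on $\prvp$.

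Subtracting, the variance takes the form
\begin{equation*}
V(\prvp) = \frac{A}{\prvp} + \frac{C}{1-\prvp} + D,
\end{equation*}
with $A = \E\|\tp\|^2 \geq 0$, $C = \E\|\tn\|^2 \geq 0$, and $D$ independent of $\prvp$. Both $\prvp \mapsto 1/\prvp$ and $\prvp \mapsto 1/(1-\prvp)$ have strictly positive second derivatives on $(0,1)$, so $V$ is the sum of a non-negative combination of convex functions and a constant, hence convex; in fact strictly convex whenever $A + C > 0$.

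There is no real obstacle here: the argument is a direct calculation. The only step worth checking carefully is the vanishing of the cross term, which relies on the Bernoulli identity $\prv(1-\prv) = 0$ and on the independence of $\prv$ from $\ps, \ns$; without independence, a $\prvp$-dependent cross term could appear and jeopardize convexity. A brief remark noting strict convexity (and therefore the existence and uniqueness of an optimal $\prvp^\ast$ minimizing the variance) would naturally tie this lemma to the discussion of tuning $\prvp$ in the main text.
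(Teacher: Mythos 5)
Your proposal is correct and takes essentially the same route as the paper's proof: both use the Bernoulli identity $\prv(1-\prv)=0$ (together with independence of $\prv$ from the phase terms) to kill the cross term, reduce the trace of the covariance to the form $\frac{A}{\prvp} + \frac{C}{1-\prvp} + D$ with $A = \mathrm{Tr}[\sigp + \mup\mup^\top] \ge 0$ and $C = \mathrm{Tr}[\sign + \mun\mun^\top] \ge 0$, and conclude convexity on $(0,1)$. The only (cosmetic) difference is that the paper verifies convexity by computing the second derivative explicitly, whereas you invoke convexity of $1/\prvp$ and $1/(1-\prvp)$ and closure under non-negative combinations.
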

Notably, this minimum can be determined analytically, given estimates of the first and second moments of $\tp$, $\tn$. We explore these properties empirically in numerical experiments outlined in \S \ref{exp-results}, and find that (i) networks can still learn effectively with this higher-variance estimator, and (ii) that analytically derived estimator variance effects outlined in the Lemmas~\ref{lem1} and~\ref{lem2} appear to predict observed variance.

Finally, we highlight that the variance, quantified by the trace of the covariance matrix, of the ISD estimator scales linearly w.r.t. the covariances and the square magnitudes of the means of the gradients of the two gradient terms.

\subsection{Bias of Always-On-Learning and Variable Phase Length in Equilibrium-Based CL}
\label{sec:3.2}

The second contribution of this study is to investigate less restrictive learning dynamics for equilibrium-based CL. We drop the subscripts $+$ or $-$ on the network states, $\ac$, when the equations discussed apply to either phase. Assume that learning occurs via gradient estimate $\gest$. Applying Equation \ref{eq2} to a standard equilibrium-based CL paradigm would require the following learning dynamics:

\begin{equation}
\begin{aligned}
    \prvp_{t+1} &= \mathbbm{1}_{t \neq k\pl} \: \prvp_t + \mathbbm{1}_{t = k\pl} \prv \\
    \data_{t+1} &= \mathbbm{1}_{t \neq k\pl} \: \data_t + \mathbbm{1}_{t = k\pl}X(\prvp_t) \\    
    \ac_{t+1} &\sim \p(\ac_{t+1} | \params_t, \ac_t, \data_{t+1}, \prvp_{t+1}) \\
    \params_{t+1} &= \params_t - \lr(t)\gest(\params_t, \ac_t, \prvp_{t+1}),
    \label{eq4}
\end{aligned}
\end{equation}

where $\p$ is a Markov transition kernel, $\prv$ is sampled according to its role in Equation \ref{eq2}, and $X$ is sampled randomly from the set of positive samples if $\prv=1$ or negative samples if $\prv=0$. To indicate that parameter updates only take place at the end of a phase, we set $\lr(t) = \lr$ if $t = k\pl$ and set $\lr(t) = 0$ otherwise. Here, $\pl$ represents phase length and $\pl$ and $k$ are both in $\mathbb{N}$. While these dynamics are temporally local in the sense that it is no longer necessary to hold distinct positive and negative gradient terms in memory, they still require precise manipulation of $\lr$ and use a fixed phase length. That is to say, we still require the explicit identification of the end of a phase to toggle a parameter update. We now relax these requirements by investigating two novel situations: one where the learning rate is fixed to $\lr(t) = \frac{\lr}{\rn} \forall t$, and another where the learning rate is fixed to the same value \emph{and} $\pl$ is chosen randomly ($\rn$ is the phase length or the mean length respectively). We label these two situations \emph{Always-On-Learning} (AoL) and \emph{AoL Random T} respectively. The learning dynamics of AoL Random T results in Equation \ref{eq4} being updated as follows:

\begin{equation}
\begin{aligned}
    \prvp_{t+1} &= \mathbbm{1}_{\trv_t=0}\prvp_t + \mathbbm{1}_{\trv_t=1}\prv \\
    \data_{t+1} &= \mathbbm{1}_{\trv_t=0}\data_t + \mathbbm{1}_{\trv_t=1}X(\prvp_t) \\
    \ac_{t+1} &\sim \p(\ac_{t+1}|\params_t, \ac_t, \data_{t+1}, \prvp_{t+1}) \\
    \params_{t+1} &= \params_t - \frac{\lr}{\rn}\gest(\params_t, \ac_t, \prvp_{t+1}),
    \label{eq5}
\end{aligned}
\end{equation}

where $\trv_t \sim \mathrm{Bernoulli}(\trvp) ~ \forall t$ and $\trvp << 1$. We remark that this form of continuous updating is only possible with the ISD gradient estimator (Equation \ref{eq2}); the standard estimator (Equation \ref{eq1}) requires both phases to have been run before the next gradient estimate can be calculated.

As with the gradient estimates presented earlier, we would like to show that the learning dynamics with constant learning rate and variable phase length will, at least within some error bound, follow unbiased gradient steps. The following theorem provides a bound on the bias introduced by training with constant learning rate and stochastic phase lengths. We prove and comment on the assumptions of the theorem in \S \ref{proof-of-th1} of the Appendix.

\begin{theorem}
\label{th1}
Consider a single phase of equilibrium-based learning where (i) an update is performed at every time-step of the network dynamics with learning rate $\frac{\lr}{\rn}$, and (ii) the phase is either fixed to value $\rn$ or ends stochastically at each step with small probability $\frac{1}{\rn}$.  Assume that equilibrium dynamics evolve for fixed $\params$ according to the Markov transition kernel $\p(\ac_{t}|\ac_{t-1}, \params)$, and that these equilibrium dynamics weakly converge to a stationary distribution at a rate at least as fast as $\CO(\frac{1}{t^2})$. Then with mild regularity conditions on $\gest(\params)$, and integrability conditions on the transition kernel, the bias of the learning updates are of the following order

\begin{align}
    \CO\Big(\frac{\lr \transient}{\rn}\Big) + \CO(\lr^2 \rn ),
\end{align}

where $\transient = \rn^m$ for $0 < m < 1$.
\end{theorem}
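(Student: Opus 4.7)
The plan is to bound the bias of the accumulated parameter update over a single phase by comparing it to an ``ideal'' single gradient step $\lr\, \E_{\pi_{\params_1}}[\gest(\params_1, \ac)]$, where $\pi_{\params_1}$ is the stationary distribution of the Markov transition kernel with the parameters frozen at their initial value $\params_1$ at the start of the phase. Writing the accumulated update as $\Delta\params = \sum_{t=1}^{T}\frac{\lr}{\rn}\gest(\params_t, \ac_t)$, where $T$ is either the fixed length $\rn$ or the random phase-ending time, linearity of expectation reduces the bias to a sum of per-step biases $b_t := \E[\gest(\params_t, \ac_t)] - \E_{\pi_{\params_1}}[\gest(\params_1, \ac)]$.

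First I would split the sum at the transient cutoff $\transient = \rn^m$ with $0<m<1$. For steps $t \leq \transient$, boundedness of $\gest$ (a mild regularity hypothesis) yields $|b_t| \leq C$, so multiplying by the $\transient$ transient steps and the per-step scale $\lr/\rn$ yields the first advertised term $\CO(\lr \transient / \rn)$.

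Next, for the near-stationary portion $t > \transient$, I would further decompose $b_t$ by the triangle inequality into a \emph{mixing} component --- the bias of an idealized chain whose parameters are frozen at $\params_1$ --- and a \emph{drift} component that captures the discrepancy between the true $\params_t$-driven chain and this idealized chain. The mixing component is controlled by the $\CO(1/t^2)$ weak-convergence hypothesis together with integrability of $\gest$ against the kernel; summed against the step size $\lr/\rn$ it is subdominant. The drift component is controlled by Lipschitz smoothness of $\gest$ and of the transition kernel in $\params$, combined with $\|\params_t - \params_1\| = \CO(\lr t/\rn)$ accumulated from $t$ prior updates of magnitude $\lr/\rn$. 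A careful Taylor expansion that tracks both the first-order dependence on parameter drift and the accumulation through $t$ successive applications of the kernel, summed over $\transient < t \leq \rn$, yields the second advertised term $\CO(\lr^2 \rn)$.

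Finally, to handle the random phase length case I would condition on the Bernoulli stopping process: since the phase ends with probability $1/\rn$ independently at each step, the phase length is geometric with mean $\rn$, and a Wald-type identity lets me transfer the fixed-$\rn$ bias estimate to the expected bias under random $T$. The main obstacle I anticipate is the coupling between the drifting-$\params$ chain and the idealized-$\params_1$ chain needed to recover the full $\CO(\lr^2 \rn)$ factor: naively applying the Lipschitz hypothesis on the transition kernel at each step yields only $\CO(\lr^2)$, so one must either iterate the kernel smoothness through all $t$ past transitions so that the perturbation compounds, or exploit a stronger regularity property of the kernel's $\params$-derivative, to obtain the additional factor of $\rn$ claimed in the theorem.
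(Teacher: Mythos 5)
Your proposal follows essentially the same route as the paper's proof: compare the accumulated update to the ideal step under the stationary distribution at frozen initial parameters, split the sum at the transient cutoff $\transient=\rn^m$ (bounded gradients give the $\CO(\lr\transient/\rn)$ term), separate the near-stationary bias into a mixing part controlled by the $\CO(1/t^2)$ rate and a drift part controlled by Taylor-expanding both $\gest$ and the transition kernel in $\params$ with drift $\CO(\lr t/\rn)$, and finally average over the geometric phase length --- and the ``iterate the kernel smoothness through all $t$ past transitions'' option you flag as the main obstacle is exactly what the paper's Step 2 does, yielding a per-step distributional error of order $\lr T t/\rn$ and hence $\CO(\lr^2 T^3/\rn^2)=\CO(\lr^2\rn)$ after taking moments of $T$. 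The only small inversion is that $\CO(\lr^2\rn)$ is an upper bound one is stuck with rather than a target to be ``recovered'': if your naive coupling genuinely gave $\CO(\lr^2)$ the theorem would still hold (and indeed the paper's deterministic analogue, Theorem~\ref{th1d}, achieves exactly $\CO(\lr^2)$ because the perturbation propagates through a composition of maps rather than a product of kernels inside an integral).
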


\textbf{Remark:} For asymptotic unbiasedness, and for learning to occur, we need the error to go to zero with a rate faster than the learning rate, $\CO(\lr)$. If we set $\rn = \frac{1}{\sqrt{\lr}} = \transient^2$, then the error rate is $\CO(\lr^{\frac{5}{4}})$, satisfying this constraint.

If one assumes that the equilibrium dynamics follow a deterministic dynamical system, $\ac_{t+1} = F(\ac_t, \params_t)$, instead of stochastic Markov dynamics, one can prove the following theorem, which relies on slightly milder assumptions and yields an improved convergence rate (see Appendix \S \ref{sec:th1d-app} for proof):

\begin{theorem}
\label{th1d}

Assume the same equilibrium learning paradigm of Theorem \ref{th1}, but with dynamics that evolve for fixed $\params$ according to the homogeneous discrete time dynamical system given by the map $F(\ac, \params)$. Assume that $\gest(\params)$ is differentiable and bounded with bounded partial derivatives for all network state values, and that $F$ is differentiable w.r.t. $\ac$ and $\params$, also with bounded partial derivatives. Finally, assume that the equilibrium dynamics converge to a fixed point at a rate at least as fast as $\CO(\frac{1}{t^2})$. Then the bias of the learning updates are of the following order:
\begin{align}
    \CO\Big(\frac{\lr \transient}{\rn}\Big) + \CO(\lr^2),
\end{align}

where $\transient = \rn^m$ for $0 < m < 1$.
\end{theorem}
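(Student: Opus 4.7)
The plan is to compare, for a single phase of length $\pl$, the Always-On-Learning update $\Delta\params = -\frac{\lr}{\rn}\sum_{t=0}^{\pl-1}\gest(\params_t, \ac_t)$ against the idealised update $\Delta\params^{\mathrm{id}} = -\lr\,\gest(\params_0, \ac^*(\params_0))$, where $\ac^*(\params_0)$ denotes the fixed point of $\ds(\cdot, \params_0)$. Writing
\begin{equation*}
\Delta\params - \Delta\params^{\mathrm{id}} = -\frac{\lr}{\rn}\sum_{t=0}^{\pl-1}\big[\gest(\params_t, \ac_t) - \gest(\params_0, \ac^*(\params_0))\big] + \frac{\lr(\rn - \pl)}{\rn}\gest(\params_0, \ac^*(\params_0)),
\end{equation*}
the second term vanishes when $\pl = \rn$ and has expectation zero when $\pl \sim \mathrm{Geom}(1/\rn)$, so the analysis reduces to bounding the first term.

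I would split the sum at $t = \transient$, with $\transient = \rn^m$ for some $0 < m < 1$. For the transient window $0 \le t < \transient$, boundedness of $\gest$ alone makes each summand $\CO(1)$, producing the $\CO(\lr \transient/\rn)$ contribution. For the tail $\transient \le t < \pl$, I decompose each summand as $A_t + B_t$, where $A_t = \gest(\params_t, \ac_t) - \gest(\params_t, \ac^*(\params_t))$ and $B_t = \gest(\params_t, \ac^*(\params_t)) - \gest(\params_0, \ac^*(\params_0))$. The term $A_t$ is bounded using the $\CO(1/t^2)$ convergence rate combined with bounded partials of $\gest$ in the state argument; summed against $\lr/\rn$, the identity $\sum_{t \ge \transient} t^{-2} = \CO(1/\transient)$ yields a contribution of order $\CO(\lr/(\rn\transient))$, which is absorbed into $\CO(\lr\transient/\rn)$ since $\transient \ge 1$. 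For $B_t$, the implicit function theorem applied to $\ac = \ds(\ac, \params)$ (via bounded partials of $\ds$) makes $\params \mapsto \ac^*(\params)$ locally Lipschitz, giving $B_t = \CO(\|\params_t - \params_0\|)$; the elementary drift bound $\|\params_t - \params_0\| \le \frac{\lr}{\rn}\sum_{u<t}\|\gest(\params_u, \ac_u)\| = \CO(\lr)$ then yields the promised $\CO(\lr^2)$ after summing at most $\rn$ terms at weight $\lr/\rn$.

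The main obstacle will be rigorously transferring the assumed frozen-parameter convergence rate $\|\ac_t - \ac^*(\params_0)\| = \CO(1/t^2)$ to the actual trajectory, in which $\params$ drifts at rate $\CO(\lr/\rn)$ per step, so that $A_t$ truly inherits the $\CO(1/t^2)$ bound despite the moving target $\ac^*(\params_t)$. I expect to handle this with a discrete perturbation argument: iterating $\ac_{t+1} - \ac^*(\params_t) = [\ds(\ac_t, \params_t) - \ds(\ac^*(\params_t), \params_t)] + [\ac^*(\params_{t-1}) - \ac^*(\params_t)]$ and using the bounded partials of $\ds$ and of $\ac^*(\cdot)$ to absorb each moving-target correction $\CO(\|\params_t - \params_{t-1}\|) = \CO(\lr/\rn)$; accumulated across the phase this contributes an additional $\CO(\lr^2)$, compatible with the theorem's stated bound.
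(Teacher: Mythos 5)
Your overall architecture (split the tail sum at $t=\transient$, charge the transient window $\CO(\lr\transient/\rn)$ by boundedness of $\gest$, use $\sum_{t\ge\transient}t^{-2}=\CO(1/\transient)$ for the converged window, and charge the parameter drift $\CO(\lr^2)$) matches the paper's, and your observation that the $\frac{\lr(\rn-\pl)}{\rn}\gest$ term has zero mean for geometric $\pl$ is a clean alternative to the paper's conditioning on $\pl\ge\transient$. However, your decomposition of the tail differs from the paper's in a way that creates a genuine gap. The paper never compares the running trajectory to the \emph{moving} fixed point $\ac^*(\params_t)$: it first freezes the parameter inside $\gest$ (Step 1: $\gest(\ac_t,\params_t)=\gest(\ac_t,\params_0)+\CO(t\lr/\rn)$), then shows by induction that the actual trajectory stays $\CO(t\lr/\rn)$-close to the trajectory obtained by iterating $\ds(\cdot,\params_0)$ from the same initial state (Step 2), and only then applies the assumed $\CO(1/t^2)$ convergence to that frozen-parameter trajectory, for which the hypothesis holds verbatim. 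Your $A_t$ term instead requires $\|\ac_t-\ac^*(\params_t)\|=\CO(1/t^2)+\CO(\lr)$ for the \emph{inhomogeneous} trajectory, and the perturbation recursion you sketch, $e_{t+1}\le L\,e_t+\CO(\lr/\rn)$ with $L$ the Lipschitz constant of $\ds$, only closes if $\ds(\cdot,\params)$ is a per-step contraction near the fixed point ($L<1$). The theorem assumes only bounded partial derivatives plus an $\CO(1/t^2)$ convergence rate for the frozen dynamics; neither implies a per-step contraction (an $\CO(1/t^2)$ rate is sub-geometric), so for $L>1$ the moving-target corrections compound geometrically rather than being ``absorbed'' as you claim. (The paper's own Step 2 induction is also loose about constant growth under $L$-Lipschitzness, but it at least never needs to transfer the convergence-rate hypothesis to a perturbed system with a moving target.)

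A second, smaller issue: your $B_t$ bound invokes the implicit function theorem to make $\params\mapsto\ac^*(\params)$ locally Lipschitz, which requires invertibility of $I-\partial\ds/\partial\ac$ at the fixed point --- again not implied by the stated hypotheses. The paper sidesteps both problems because the only limit object it ever references is $\ac_\infty=\lim_t\ac_t$ for the frozen system at $\params_0$. The repair is to reorder your argument as the paper does: compare the actual iterates to the frozen-$\params_0$ iterates started from the same $\ac_0$ (a plain trajectory-perturbation bound yielding an $\CO(t\lr/\rn)$ discrepancy, hence $\CO(\lr^2)$ after summing), and apply the convergence-rate assumption only to the frozen trajectory, exactly where it is stated.
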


\begin{figure*}[ht!]
\vskip 0.2in
\begin{center}
\centerline{\includegraphics[width=1.9\columnwidth]{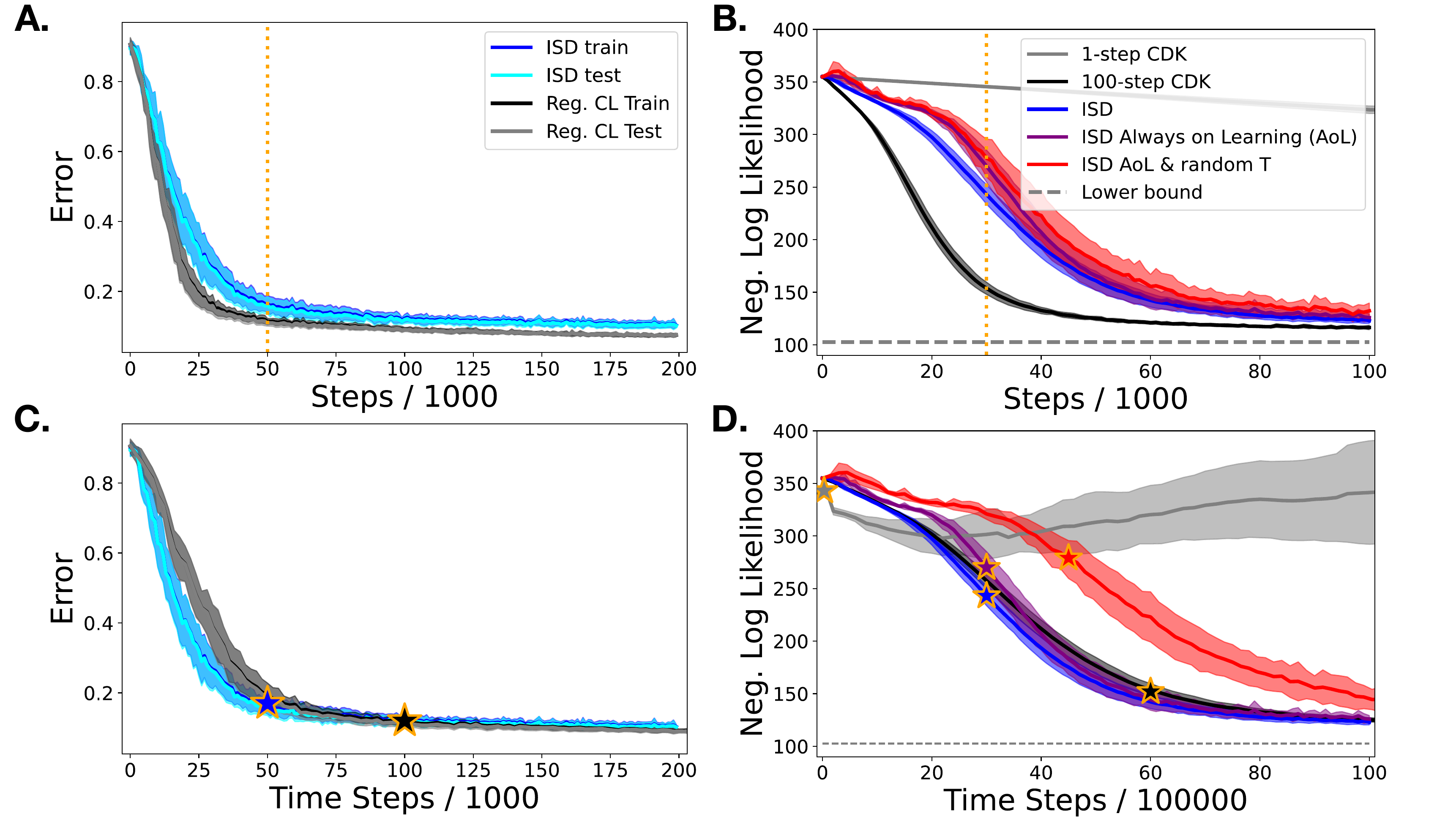}}
\caption{Comparison of ISD, AoL ISD, and AoL random T ISD variants (ours) with standard temporally non-local methods. 
\textbf{A.} CL on MNIST: comparison of test and train data learning curves for the temporally non-local Forward-Forward algorithm versus the ISD version. Y-axis is mean classification error; x-axis is gradient steps divided by 1000. 
\textbf{B.} Equilibrium-based CL on BAS: comparison of 1-step and 100-step CDK algorithm to ISD, AoL ISD, and AoL random $\pl$ ISD for training an RBM on the BAS dataset. Y-axis is negative log-likelihood; x-axis is gradient steps (phase counts) divided by 1000 for CDK and regular ISD (ISD AoL and ISD AoL Random $\pl$).
\textbf{C. \& D.} Same plots as A. and B. but with the x-axis rescaled to show the number of `time' steps used in network dynamics for each algorithm, rather than number of gradient steps or phases (recall that certain algorithms require more or fewer time steps for a single phase). Dotted lines in A. and B. show a step count that corresponds to the number of steps marked by stars in C. and D for each method.
In all cases, ISD methods perform only slightly worse than the temporally non-local, learning rate modulated and fixed phase CDK methods, but with greater variance and some reduction in learning speed over phase counts. This speed distinction shrinks when measuring learning over network dynamics time steps rather than gradient steps (phase counts).
All plotted quantities are means $\pm 1$ standard deviation. For ISD AoL random $\pl$ the x-axis of D is mean time steps.}
\label{res-fig1-3}
\end{center}
\vskip -0.2in
\end{figure*}

Importantly, both Theorems \ref{th1} and \ref{th1d} apply not only to equilibrium-based CL methods, but to \emph{any method} satisfying the theorem assumptions that uses equilibrium-based gradient updates.

Taken together, these results define (mild) conditions for which CL can still converge if parameters are updated at each time step of network dynamics, both with a set phase length and with phases having variable length and switching times. In the next section, we illustrate how these relaxed CL conditions maintain successful learning at a limited cost compared to standard CL. 

\section{Experimental Findings}
\label{exp-results}

While the unbiased (and asymptotically unbiased, in the case of equilibrium CL) property of the proposed algorithms provides theoretical evidence for their utility, we wished to test whether the algorithms would work in practice. We empirically evaluated the proposed algorithms by using them to modify two different CL methods: maximum likelihood learning in energy based models \cite{lecun2006tutorial} and the recently proposed, noise contrastive estimation-inspired \cite{gutmann2010noise} Forward-Forward (FF) algorithm \cite{hinton2022forward}. For the former paradigm we trained a Restricted Boltzmann Machine (RBM) \cite{fischer2014training}, which we selected given its history as a classic model in the computational neuroscience community. For the former, we trained a feed-forward neural network with two hidden layers. We note that the RBM is an equilibrium-based generative model, while the FF algorithm is non-equilbrium-based and is used to train a discriminative model. Testing was performed on the binarized MNIST (bMNIST) and the Bars And Stripes (BAS) \cite{fischer2014training} datasets for the RBM, and MNIST for the FF-trained network. We chose these datasets because the log-likelihood is tractable for BAS, and because both datasets have been used to benchmark related models. Because of the issues of temporal non-locality inherent in batch learning we use batch sizes of $1$ throughout. For simplicity, we tested only vanilla SGD \cite{robbins1951stochastic}, without averaging, momentum, or adaptation, for the RBM; we used ADAM \cite{kingma2014adam} to train the forward-forward network. We stress that we are not comparing performance of FF and RBM; rather, we are using them as two different tests of the bio-plausible algorithms proposed in this paper. Implementation details can be found in  appendix~\ref{exp_details}.

\subsection{Estimators Learn Comparably to Classic Methods}
We first compared ISD with the gradient estimator given by Equation \ref{eq2} to the traditional two-term CL estimator (Equation \ref{eq1}) for training a feed-forward neural network (Fig.\ref{res-fig1-3}.A,C) and a RBM (Fig.\ref{res-fig1-3}.B,D). In all cases, Fig.\ref{res-fig1-3} shows the loss v.s. training time, measured either over gradient steps---or phase counts for the two AoL algorithm versions---(top row) or number of time steps in network dynamics (bottom row). We highlight the importance of contrasting these distinct temporal units, as standard CL updates parameters only at the end of positive/negative phase pairs, each containing several network dynamics iterates (time-steps). In contrast, ISD performs updates more often (either at the end of each phase or at each time step), and the ISD AoL random T version also has a variable phase length. As such, comparing learning curves based on the number of network iterates---the main compute requirement for biological systems---reveals how close ISD methods are to standard CL.

As expected, ISD exhibited higher variance learning curves---a result of using a higher variance gradient estimate---but was still able to learn quite robustly and to a degree of accuracy only slightly worse than standard two-term gradient estimates. In the case of equilibrium learning, the extra bias introduced by the constant learning paradigm led, again, to a further slight reduction in training accuracy and higher noise, which was further accentuated by introducing random phase lengths. While we did not perform an extensive hyper-parameter optimization, we noted that the ISD estimators tended to require a slightly lower learning rate, and converged slightly slower than the two-term estimators.

\begin{figure}[ht!]
\vskip 0.2in
\begin{center}
\centerline{\includegraphics[width=0.95\columnwidth]{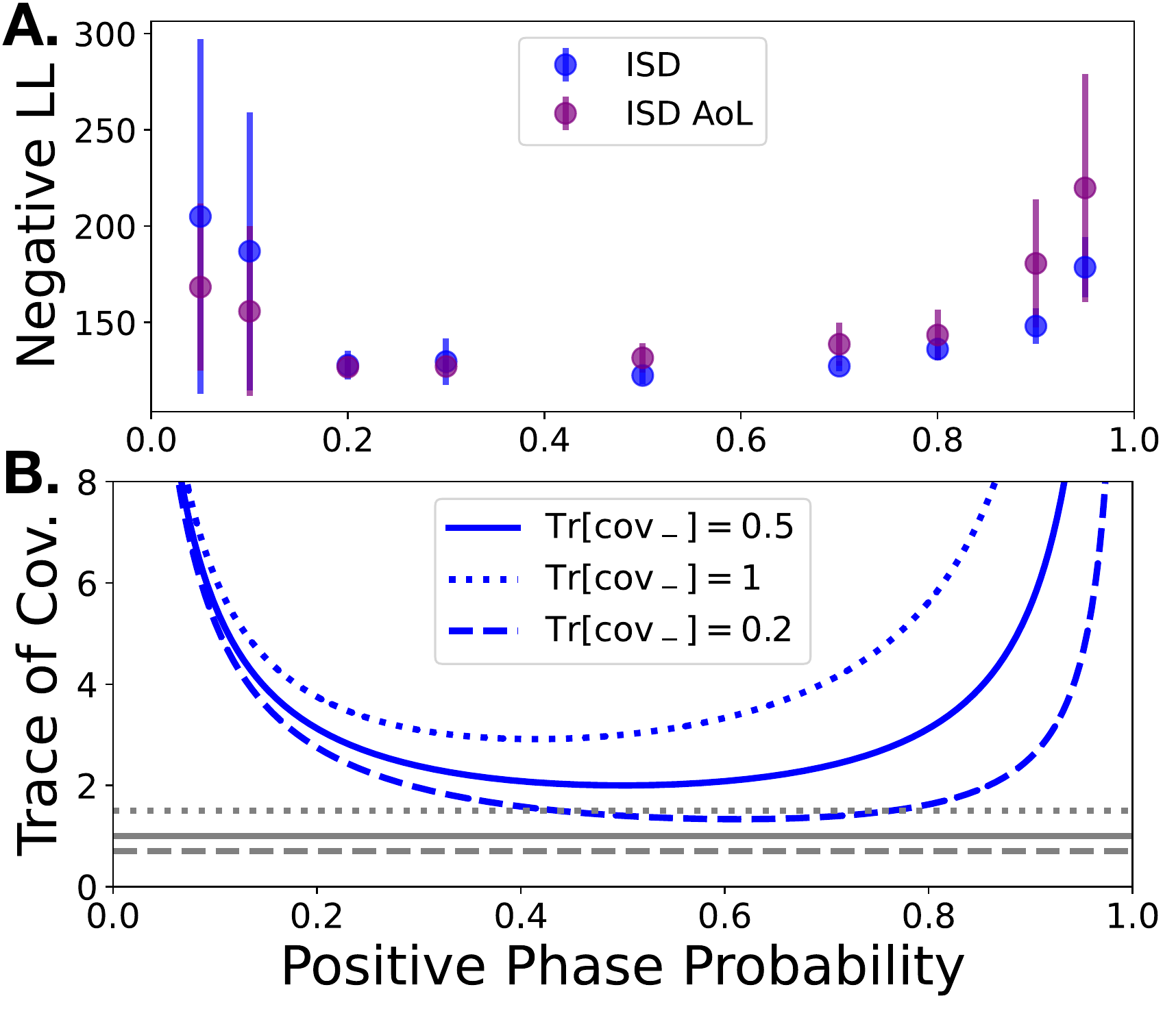}}
\caption{Algorithm performance as function of positive phase probability, $\prvp$. Performance is robust to changes in this parameter, and the optimal value is not always at $\prvp = 0.5$. We expect this relationship to be due to the variance of the estimator. \textbf{A.} Performance on BARS dataset for RBM trained using ISD and ISD with constant learning rate. Error bars are $\pm 1$ standard deviation. \textbf{B.} Gradient estimator covariance matrix trace as function of positive-phase probability, calculated analytically. Different scenarios depict distinct values of the negative phase's gradient estimator covariance trace, holding the positive phase covariance trace constant at $0.5$; e.g. solid line is for $\mathrm{TR}\big[\mathrm{cov}\big( \tp \big)\big] = \mathrm{Tr}\big[\mathrm{cov}\big( \tn \big)\big] = 0.5$. These quantities are a measure of variance of the estimators. Blue lines are the traces of the covariance matrix of the gradient estimate in Equation \ref{eq2}, grey are for that of Equation \ref{eq1}. To simplify the plot we have assumed that the means of $\tp$ and $\tn$ are both the zero vector.}
\label{res-fig2}
\end{center}
\vskip -0.2in
\end{figure}

\subsection{Robustness to Changes in Positive Phase Proportion}
Next, we wished to explore the effect of parameter $\prvp$, the proportion of time spent in positive phases during training, for algorithms making use of the gradient in Equation \ref{eq2}. We investigated this property for the RBM model (Fig.\ref{res-fig2}.A). Interestingly, we found that learning is robust to changes in $\prvp$, and that the optimal value is not always for equal time spent in positive versus negative phases. In fact, we saw better results for a higher $\prvp$ on the bMNIST dataset (see \S \ref{sec:mnist-fig} of the Appendix), and better results for lower values of $\prvp$ on the BAS dataset. Robustness to parameter changes was substantial, with values as high as $0.8$ and as low as $0.2$ not performing drastically worse than the best-performing $\prvp$. 

The mean of the ISD estimator is the same regardless of the ratio of positive to negative phases, so ISD will remain an unbiased estimator as $\prvp$ changes. Conversely, the \emph{variance} of the ISD estimator depends on $\prvp$, a phenomenon that we conjecture to underlie the observed robustness of learning to changes in this parameter. High variance detrimentally affects learning by requiring lower learning rates for an accurate gradient estimate---essentially averaging over more learning steps---and, in absence of this, can result in instability. We investigate ISD estimator variance in the next section.

\subsection{Variance of the Gradient Estimate}

To further interrogate the effect of $\prvp$ on learning, we explored how $\prvp$ affects the variance of the ISD gradient estimator (see Fig.\ref{res-fig2}.B). The gradient estimate is generally a multivariate random variable and thus one uses a covariance matrix to quantify variance. To generalize the variance to higher dimensions we consider the expected Euclidean distance of the random variable from the mean, which is the trace of the covariance matrix and is equal to the variance in one dimension. Given the means and covariances of $\tp$ and $\tn$, one can derive the trace of the covariance matrix of $\gest(\params)$ as a function of $\prvp$ (see \S \ref{sec:bias-variance}). It is this function that is plotted for different assumed values of the means and traces of the covariances of $\tp$ and $\tn$ in Fig.\ref{res-fig2}.B. We observe that higher (lower) values of $\prvp$ lead to less variance when $\tp$ has lower (higher) variance than $ \tn $. We hypothesize that the effect of $\prvp$ on estimator variance may underlie $\prvp$'s effect on learning. Specifically, the variance of the ISD estimator is, for appropriately chosen gradient step-sizes, a convex function of $\prvp$ with rather low curvature around the local minimum (Fig.\ref{res-fig2}.B). Because the variance of the ISD estimator, as a function of $\prvp$, does not increase quickly near the minimum, there is a range of values with sufficiently low variance to allow for robust learning.

Lastly, we know from the theory section that the trace of the covariance of $\gest(\params)$ is a convex function of the positive phase parameter $\prvp \in (0, 1)$, so the value of $\prvp$ leading to minimal estimator variance is unique. If the effect of $\prvp$ on learning is primarily via its effect on the variance of the ISD estimator, the convexity of this function could be used to simplify hyper-parameter initialization. Specifically, one can estimate means and covariances of the positive and negative phase gradient terms and then use our theoretical results (\S \ref{sec:bias-variance} Equation \ref{eq_minimal_alpha}) to solve for the value of $\prvp$ that produces the minimal estimator variance analytically.

\section{Discussion}

This work has implications for experimental and theoretical neuroscience, and neuromorphic computing. For neuroscience, it indicates that a lack of periodicity or sharp learning rate modulation in a neural circuit is not a basis for discounting CL as a potential algorithm implemented by that circuit. Our results add to a body of literature \cite{illing2021local, laborieux2022holomorphic, pogodin2020kernelized} motivating an experimental search for global state variables, e.g. $\prvp$, that should be broadcast to all neurons in a network--perhaps via neuromodulatory signals--to enable learning. In particular, we demonstrate that biological systems implementing CL need not necessarily rely on an oscillatory global signal, as has been previously proposed \cite{baldi1991contrastive, laborieux2022holomorphic}, but could instead implement phase switching through a more stochastic form of gating. This could correspond behaviourally to saccades \cite{illing2021local} or periods of rest associated with hippocampal replay events \cite{davidson2009hippocampal}, for example. Furthermore,  our research exhibits two opportunities for synergy with other recent neuroscience literature. First, it has been hypothesized that neural activation dynamics encode probabilities by sampling \cite{echeveste2020cortical}. Our work provides a potential path to learning bio-plausible probabilistic sampling networks: probabilistic models can be framed as energy based models with a negative log-likelihood energy function \cite{ackley1985learning, lecun2006tutorial}, and the gradient of such a model is of the two-term CL style discussed herein. Second, several recent studies of equilibrium learning provide powerful new learning models, but lack theoretical guarantees for concurrent, online, latent space and learning dynamics \cite{laborieux2022holomorphic, meulemans2022least}. Our study provides guarantees that may be of use, with Theorems \ref{th1} and \ref{th1d}.

For neuromorphic computing, this work provides theoretical and empirical motivation for CL algorithms with more relaxed memory requirements than those of traditional, temporally non-local approaches. Thus, it provides a potential path towards simplifying CL and equilibrium CL algorithms to be more easily implemented on neuromorphic hardware, analogous to past work on Equilibrium Propagation \cite{ernoult2020equilibrium} but for a much broader algorithm class. It is our hope that this will aid in the development of more energy-efficient deep learning.

The lesser memory requirements of the proposed gradient estimator come at the cost of higher estimator variance. In our study this manifested in a need for reduced learning rates, effectively averaging over more gradient steps to combat this variance increase, and further resulted in slight reductions in algorithm performance. This is a clear example of an (at least approximately) unbiased algorithm that trades lower variance for locality. Interestingly, such a trade-off has been noted before for spatial locality \cite{richards2019deep}, as opposed to the temporal locality explored in this study. Further work is warranted to determine whether such a locality-variance trade-off is a general phenomenon for unbiased gradient estimators.

It bears mentioning that the added noise in our learning rule is not a priori effected by the curse of dimensionality, as is the case for certain classic computational neuroscience algorithms \cite{werfel2003learning}. The additional variance introduced by ISD is caused by the variation of a single, scalar Bernoulli variable, $\prvp$; therefore, as shown in Equation \ref{eq_estimator_var}, ISD inherits much of its variance from the behaviour of the positive and negative phase terms of the CL learning rule it is based upon.

This work is, to our knowledge, the first to provide a principled method of manipulating the amount of time spent in the positive versus negative phase for CL, via the importance sampling positive-phase probability $\prvp$. Further work is warranted to determine how this might be generalized to other biphasic algorithms, like Wake-Sleep \cite{dayan1995helmholtz}, and how this parameter might be leveraged for computational benefits. In particular one might be able to trade compute for data in sparse data regimes. In the case of a very low $\prvp$ an algorithm might still perform well using few positive phases---and thus few data points---compared with the equal number of positive and negative phases required by a traditional CL approach. Ultimately, our results provide a compelling example where a gradient estimator with markedly more variance still enables effective learning, while providing important benefits for memory storage and temporal locality.

\section{Conclusion}

In this study we proposed an alternative, temporally local gradient estimate for CL, which can be applied to any CL algorithm that uses a two-term gradient to learn from positive and negative phases. We further proved two theorems applying to algorithms that require equilibrium dynamics to be run to obtain the gradient estimate used at each step of gradient descent. This theorem shows that equilibrium-based algorithms that allow learning to continue during the dynamic phases and, moreover, have stochastic phase lengths, will still exhibit asymptotically unbiased gradients in the small learning rate long mean phase-length limit. Notably, this theorem applies, with some basic assumptions, to all equilibrium-based algorithms, contrastive or otherwise. This theory work was supported by the experiments presented above. In summary, this work suggests that CL does not require the precisely regulated, periodic learning dynamics that are classically used to train this form of AI, thus extending the set of systems capable of implementing CL beyond what might have been previously thought possible.

\section*{Acknowledgements}
The authors acknowledge support from Samsung Electronics Co., Ldt. and the Simons Foundation Collaboration for the Global Brain. GL acknowledges the support from Canada CIFAR AI Chair Program, as well as NSERC Discovery Grant [RGPIN-2018-04821] and the Canada Research Chair in Neural Computations and Interfacing (CIHR, tier 2). EW acknowledges support from three scholarships: a NSERC CGS-D, a FRQNT B2X, and a UNIQUE Excellence Scholarship. Authors wish to thank the other members of the Lajoie lab for their support; in particular, Maximilian Puelma Touzel and Alexandre Payeur for the helpful conversations, and Mohammad Pezeshki for inspiration in the coding of the forward-forward algorithm.


\bibliography{example_paper}
\bibliographystyle{icml2023}

\newpage
\appendix
\onecolumn

\section{Importance Sampling - Discrete (ISD) Gradient Estimate}

\subsection{Bias and Variance of ISD}
\label{sec:bias-variance}

In this section we investigate the bias and variance of the ISD estimator. First, we restate the standard CL gradient estimator and the ISD estimator, along with Proposition \ref{prop1} and Lemmas \ref{lem1} and \ref{lem2} for ease of reference. The standard two term estimate, Equation \ref{eq1} in the main paper, is
\begin{align}
\gest(\params) = \tp - \tn,
\label{eq1-app}
\end{align}

and the ISD gradient estimate, Equation \ref{eq2} in the main paper, is
\begin{align}
\gest(\params) = \frac{\prv}{\prvp}\tp - \frac{1 - \prv}{1 - \prvp}\tn.
\label{eq2-app}
\end{align}

\begin{proposition}
For $\prvp \in (0,1)$, the ISD gradient estimate of Equation \ref{eq2} is an unbiased estimate of the gradient given in Equation \ref{eq1}.
\label{prop1-app}
\end{proposition}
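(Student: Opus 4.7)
The plan is to prove unbiasedness by directly computing the expectation of $\gest(\params)$ under the joint law of the Bernoulli variable $\prv$ and the samples $\ps, \ns$. The key observation is that $\prv$ is drawn independently of the positive/negative sample generation, so the expectation factorizes cleanly into a product of an expectation over $\prv$ and an expectation over the respective sample.

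First, I would write $\E[\gest(\params)]$ by linearity of expectation as
\begin{align*}
\E[\gest(\params)] = \E\!\left[\frac{\prv}{\prvp}\tp\right] - \E\!\left[\frac{1-\prv}{1-\prvp}\tn\right].
\end{align*}
Next, invoking the independence of $\prv$ from $\ps$ (and likewise from $\ns$), each expectation factors, giving $\E[\prv]/\prvp \cdot \E[\tp]$ for the first term and $\E[1-\prv]/(1-\prvp) \cdot \E[\tn]$ for the second. Plugging in $\E[\prv] = \prvp$ and $\E[1-\prv] = 1-\prvp$, the importance weights are identically $1$, and we recover $\E[\tp] - \E[\tn]$, which is the expectation of Equation \ref{eq1}.

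The only conceptual subtlety, which I would spell out explicitly, is why the reweighting is well defined: one needs $\prvp \in (0,1)$ so that neither importance weight diverges and the event of sampling each phase has positive probability. This is exactly the hypothesis of the proposition. A complementary view worth mentioning is the conditional one: given $\prv = 1$ the estimator equals $\frac{1}{\prvp}\tp$, and given $\prv = 0$ it equals $-\frac{1}{1-\prvp}\tn$, so applying the tower property with the marginal $\mathrm{Pr}(\prv=1)=\prvp$ recovers the same conclusion without invoking independence explicitly.

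There is no real obstacle here; the result is essentially a one-line importance-sampling identity, and the argument is routine once the $\prvp\in(0,1)$ assumption is used to justify the reweighting. The most important thing to state carefully is the underlying probabilistic setup (that $\prv$ is drawn independently of the sample-generation mechanism), since this is what makes the factorization legitimate and distinguishes the ISD estimator from one where $\prv$ might be correlated with the state of the network.
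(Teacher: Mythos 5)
Your proposal is correct and follows essentially the same route as the paper's proof: take the expectation over $\prv$, use $\E[\prv]=\prvp$ so the importance weights cancel, and recover the two-term gradient. The extra care you take in stating the independence of $\prv$ from the sample-generation mechanism is a reasonable elaboration of what the paper leaves implicit, but it is not a different argument.
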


\begin{proof}
Follows trivially by taking expectations w.r.t. $\prv$ and observing that, for a Bernoulli random variable, the expectation is equal to the parameter $\prvp$ so that the $\prvp$ and $1-\prvp$ fraction denominators disappear.
\end{proof}

\textbf{Remark 1:} We further note that this is, by definition, a single-sample importance sampling approach to evaluating the expected value $\mathbb{E}\big(f(\prv)\big)$, this being the two-term sum of Equation \ref{eq1-app}, if $f(1) = \tp$ and $f(0) = \tn$.

\textbf{Remark 2:} In the case where $\prvp = 0.5$, an alternative perspective on this estimate is that it is a version of stochastic gradient descent style uniform sampling from an augmented dataset. This augmented set is composed of all positive and negative samples, each sample augmented with an extra variable indicating whether the sample is positive or negative. This extra variable is then used to determine whether the gradient is added or subtracted to the current parameter value.

\begin{lemma}
Assume $\tp$ and $\tn$ are statistically independent and the inner product of their means is non-negative. If $\prvp \in (0,1)$, the ISD gradient estimate of Equation \ref{eq2} has strictly greater variance (quantified by the trace of the covariance of the gradient vector) than the estimate given by Equation \ref{eq1}.
\label{lem1-app}
\end{lemma}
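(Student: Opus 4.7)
The plan is to exploit the fact that, by Proposition \ref{prop1-app}, the two estimators share the same mean, so comparing the traces of their covariances reduces to comparing the expected squared Euclidean norms $\E[\|g\|^2]$. Writing this out will turn a variance inequality into an elementary algebraic inequality in $b$, $\E[\|\gp\|^2]$, $\E[\|\gn\|^2]$, and the inner product $\E[\gp]^\top \E[\gn]$.

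First I would expand the standard estimator. Since $\gp$ and $\gn$ are independent,
\begin{equation*}
\E\bigl[\|\gp - \gn\|^2\bigr] = \E\bigl[\|\gp\|^2\bigr] + \E\bigl[\|\gn\|^2\bigr] - 2\,\E[\gp]^\top \E[\gn].
\end{equation*}
Next I would condition on $B$ in the ISD estimator. On $\{B=1\}$ (probability $b$) the estimator is $\gp/b$, and on $\{B=0\}$ (probability $1-b$) it is $-\gn/(1-b)$, so by the law of total expectation
\begin{equation*}
\E\bigl[\|\gest\|^2\bigr] = \tfrac{1}{b}\E\bigl[\|\gp\|^2\bigr] + \tfrac{1}{1-b}\E\bigl[\|\gn\|^2\bigr].
\end{equation*}

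Subtracting the two second moments and collecting terms gives
\begin{equation*}
\tfrac{1-b}{b}\,\E\bigl[\|\gp\|^2\bigr] + \tfrac{b}{1-b}\,\E\bigl[\|\gn\|^2\bigr] + 2\,\E[\gp]^\top \E[\gn].
\end{equation*}
Each of the first two coefficients is strictly positive on $b \in (0,1)$, both second moments are non-negative, and the inner-product term is non-negative by assumption. To get the strict inequality, I would note that unless $\gp$ and $\gn$ both vanish almost surely (a degenerate case in which the estimators coincide and there is nothing to prove), at least one of $\E[\|\gp\|^2]$, $\E[\|\gn\|^2]$ is strictly positive, which forces the full expression to be strictly positive.

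The only potentially delicate step is handling the degenerate edge case where one or both phase gradients vanish almost surely; in the non-degenerate setting assumed implicitly by the lemma, the positivity of $1/b$ and $1/(1-b)$ together with the sign assumption on the inner product makes the argument essentially automatic. No serious obstacle arises; the substance of the lemma is really the observation that importance sampling a two-term sum inflates second moments by factors $1/b$ and $1/(1-b)$, and the non-negative inner-product hypothesis precisely rules out the only way this inflation could be cancelled out.
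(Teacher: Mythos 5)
Your proposal is correct and follows essentially the same route as the paper: both compute the second moment of the ISD estimator (yours by conditioning on $B$, the paper's by using $B(1-B)=0$ inside the matrix of second moments), obtain the $1/b$ and $1/(1-b)$ inflation factors, and conclude from independence plus the non-negative inner product of the means. Your reduction to comparing expected squared norms via the shared mean, and your explicit note on the degenerate almost-surely-zero case, are just a more streamlined scalar presentation of the paper's covariance-matrix-and-trace computation.
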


\begin{lemma}
The variance of the gradient estimator given by Equation \ref{eq2} is a convex function of $\prvp$ on $(0,1)$.
\label{lem2-app}
\end{lemma}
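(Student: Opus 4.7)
The plan is to write the trace of the covariance of the ISD estimator explicitly as a function of $\prvp$, and then observe that it is a nonnegative linear combination of two convex functions of $\prvp$ on $(0,1)$, plus a $\prvp$-independent constant.

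First I would set $A = \tp$ and $B = \tn$ for notational brevity, with $\prv \sim \mathrm{Bernoulli}(\prvp)$ independent of $(A,B)$, and compute the outer product $\gest \gest^\top$. Crucially, since $\prv \in \{0,1\}$, we have $\prv^2 = \prv$, $(1-\prv)^2 = 1-\prv$, and $\prv(1-\prv) = 0$, so the cross terms vanish and
\begin{equation*}
\gest \gest^\top \;=\; \frac{\prv}{\prvp^2}\, AA^\top \;+\; \frac{1-\prv}{(1-\prvp)^2}\, BB^\top.
\end{equation*}
Taking expectation first over $\prv$ (using $\E[\prv]=\prvp$, $\E[1-\prv]=1-\prvp$) and then over $(A,B)$ yields
\begin{equation*}
\E[\gest\gest^\top] \;=\; \frac{1}{\prvp}\,\E[AA^\top] \;+\; \frac{1}{1-\prvp}\,\E[BB^\top].
\end{equation*}
By Proposition~\ref{prop1-app}, $\E[\gest] = \E[A] - \E[B]$, which does not depend on $\prvp$.

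Next I would take the trace of the covariance. Using linearity of the trace and writing $\|\cdot\|$ for the Euclidean norm,
\begin{equation*}
V(\prvp) \;:=\; \mathrm{tr}\bigl(\mathrm{cov}(\gest)\bigr) \;=\; \frac{1}{\prvp}\,\E\|A\|^2 \;+\; \frac{1}{1-\prvp}\,\E\|B\|^2 \;-\; \bigl\|\E[A]-\E[B]\bigr\|^2.
\end{equation*}
Only the first two terms depend on $\prvp$. The functions $\prvp \mapsto 1/\prvp$ and $\prvp \mapsto 1/(1-\prvp)$ are both strictly convex on $(0,1)$ (their second derivatives are $2/\prvp^3$ and $2/(1-\prvp)^3$, which are strictly positive on $(0,1)$). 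Since $\E\|A\|^2 \geq 0$ and $\E\|B\|^2 \geq 0$, $V(\prvp)$ is a nonnegative linear combination of convex functions plus a constant, hence convex on $(0,1)$, completing the proof.

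There is essentially no hard step here: the only thing to be careful about is exploiting the Bernoulli identity $\prv^2 = \prv$ to eliminate the cross terms cleanly, so that the second-moment matrix decouples into a sum whose $\prvp$-dependence is exactly $1/\prvp$ and $1/(1-\prvp)$. From there convexity is immediate, and one moreover obtains as a byproduct the closed form for the minimizing $\prvp^\ast$ referenced elsewhere in the paper.
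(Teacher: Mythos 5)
Your proof is correct and follows essentially the same route as the paper's: both exploit the Bernoulli identities to kill the cross terms, obtain the same $\frac{1}{\prvp}$ and $\frac{1}{1-\prvp}$ decomposition of the second-moment matrix, and conclude convexity from the positivity of the second derivatives of those two terms weighted by nonnegative traces. The only cosmetic difference is that you phrase the final step as a nonnegative combination of convex functions while the paper explicitly differentiates the trace twice, which are the same observation.
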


Let us define $\sigp = \mathrm{cov}\big(\tp\big)$, $\sign = \mathrm{cov}\big(\tn\big)$, $\sigpn = \mathrm{cov}\big(\tp, \tn\big)$, $\mup = \E\big(\tp\big)$, and $\mun = \E\big(\tn\big)$ To prove the first Lemma, we first note that the covariance matrix of the standard CL gradient estimate given by Equation \ref{eq1-app} is $\sigp + \sign - 2\sigpn$. We now prove the following proposition, which provides the bulk of the proof of the two lemmas:

\begin{proposition}
Assume $\prvp \in (0, 1)$. The covariance matrix for the ISD gradient estimate vector defined by Equation \ref{eq2-app} is
\begin{align}
    \Sigma = \frac{1}{\prvp}\sigp + \frac{1}{1-\prvp}\sign + \frac{1-\prvp}{\prvp}\mup\mup^\top + \frac{\prvp}{1-\prvp}\mun\mun^\top + \mup\mun^\top + \mun\mup^\top.
    \label{eq_estimator_var}
\end{align}

Moreover, the trace of this matrix is a convex function of $\prvp$, with minimum at
\begin{align}
    \prvp_\mathrm{min} = \frac{\mathrm{Tr}[\sigp + \mup\mup^\top] - \sqrt{\mathrm{Tr}[\sigp + \mup\mup^\top]\mathrm{Tr}[\sign + \mun\mun^\top]}}{\mathrm{Tr}[\sigp + \mup\mup^\top]-\mathrm{Tr}[\sign + \mun\mun^\top]},
    \label{eq_minimal_alpha}
\end{align}
\end{proposition}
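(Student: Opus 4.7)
The plan is to compute $\Sigma = \mathrm{Cov}(\gest)$ directly from the identity $\Sigma = \E[\gest\gest^\top] - \E[\gest]\E[\gest]^\top$, then analyse the trace to obtain both the minimizer formula and the convexity asserted in Lemma~\ref{lem2-app}. The key observation that keeps the computation clean is that, since $\prv$ takes values in $\{0,1\}$, the identities $\prv^2=\prv$, $(1-\prv)^2=1-\prv$, and $\prv(1-\prv)=0$ hold \emph{pointwise}, not merely in expectation, so the cross term in $\gest\gest^\top$ vanishes identically.

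Writing $\gp$ and $\gn$ for $\tp$ and $\tn$ for brevity, I would first expand
\begin{equation*}
\gest\gest^\top \;=\; \frac{\prv}{\prvp^2}\,\gp\gp^\top \;+\; \frac{1-\prv}{(1-\prvp)^2}\,\gn\gn^\top
\end{equation*}
using the Bernoulli identities above. Taking the expectation over $\prv$ (independent of $\ps,\ns$) collapses $\prvp^2$ and $(1-\prvp)^2$ to $\prvp$ and $1-\prvp$ in the denominators. Invoking the independence of $\ps$ and $\ns$ (the same assumption used for Lemma~\ref{lem1-app}) together with $\E[XX^\top]=\mathrm{Cov}(X)+\E[X]\E[X]^\top$ then yields
\begin{equation*}
\E[\gest\gest^\top] \;=\; \frac{1}{\prvp}\bigl(\sigp+\mup\mup^\top\bigr) \;+\; \frac{1}{1-\prvp}\bigl(\sign+\mun\mun^\top\bigr).
\end{equation*}
Subtracting $\E[\gest]\E[\gest]^\top = (\mup-\mun)(\mup-\mun)^\top$ from Proposition~\ref{prop1-app} and using $1/\prvp-1 = (1-\prvp)/\prvp$ (and symmetrically for the negative-phase term) reorganises the rank-one $\mu\mu^\top$ contributions into exactly the stated expression for $\Sigma$.

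For the minimizer, I would take traces and group $\prvp$-dependent terms to write $\mathrm{Tr}(\Sigma) = a/\prvp + b/(1-\prvp) + C$, where $a = \mathrm{Tr}[\sigp+\mup\mup^\top]$, $b = \mathrm{Tr}[\sign+\mun\mun^\top]$, and $C = -\|\mup-\mun\|^2$ collects the $\prvp$-independent terms. The constants $a,b$ are non-negative as traces of positive semi-definite matrices, and assuming they are strictly positive the second derivative $2a/\prvp^3 + 2b/(1-\prvp)^3$ is positive on $(0,1)$, giving strict convexity (which simultaneously proves Lemma~\ref{lem2-app}). Setting the first derivative to zero produces $a(1-\prvp)^2 = b\prvp^2$; taking positive square roots (both sides are positive on $(0,1)$) gives $\prvp_{\mathrm{min}} = \sqrt{a}/(\sqrt{a}+\sqrt{b})$, and multiplying numerator and denominator by $\sqrt{a}-\sqrt{b}$ (for $a\ne b$, with the $a=b$ case giving $\prvp_{\mathrm{min}}=1/2$ by continuity) converts this to the stated form $(a-\sqrt{ab})/(a-b)$.

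I do not anticipate a genuine obstacle: the calculation is algebraic and the minimization is one-dimensional calculus, with the only real trick being the pointwise collapse of the cross term via $\prv(1-\prv)=0$. The one subtlety is justifying the interchange of the $\prv$-expectation with the $(\ps,\ns)$-expectation, which follows from the independence of $\prv$ from $(\ps,\ns)$ and Fubini's theorem under the mild integrability of $\gp$ and $\gn$ implicit throughout the section.
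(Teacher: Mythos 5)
Your proposal is correct and follows essentially the same route as the paper's proof: compute $\E[\gest\gest^\top]$ using the pointwise Bernoulli identities (so the cross term vanishes via $\prv(1-\prv)=0$), subtract $(\mup-\mun)(\mup-\mun)^\top$, and then do one-dimensional calculus on $\mathrm{Tr}(\Sigma)=a/\prvp+b/(1-\prvp)+C$; your derivation of the minimizer as $\sqrt{a}/(\sqrt{a}+\sqrt{b})$ followed by rationalization is in fact a bit cleaner than the paper's quadratic-formula-plus-root-selection argument. One small correction: you should not invoke independence of $\ps$ and $\ns$ here --- the proposition does not assume it, and your own computation never uses it, since the only cross terms are killed pointwise by the Bernoulli identity before any factorization of expectations is needed.
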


\begin{proof}
We wish to calculate the covariance of the gradient estimate, as a function of $\prvp$, to compare with the covariance of the original gradient estimate, and to see when the trace of this covariance is minimized w.r.t. $\prvp$. Note that we use the trace of the covariance instead of the variance since we are working in a multi-dimensional space. We begin by deriving the matrix of second moments of the gradient vector, which we denote by $\mathrm{corr}$. We also define $\fp = \frac{\prv}{\prvp}\tp$ and $\fn = -\frac{1 - \mathcal{\prv}}{1 - \prvp}\tn$ to simplify notation. Now, note that under this notation, the $i^{th}$ element of the ISD gradient vector estimate is $\fp_i + \fn_i$. Thus
\begin{align}
    &\mathrm{corr}\big(\fp_i + \fn_i, \fp_j + \fn_j \big) \nonumber \\ &= \mathbb{E}\bigg[\fp_i\fp_j + \fp_i\fn_j + \fp_j\fn_i + \fn_i\fn_j\bigg] \nonumber \\
    &= \mathbb{E}\bigg[\fp_i\fp_j + \fn_i\fn_j\bigg] \nonumber
     \\
    &= \frac{1}{\prvp}[\Sigma _{+ij} + \mu_{+i}\mu_{+j}] + \frac{1}{1-\prvp}[\Sigma_{-ij} + \mu_{-i}\mu_{-j}],
\end{align}

where in the second line we used linearity of expectation and the fact that $\prv(1 - \prv) = 0$, and the in third line we used the form of the second moments of a Bernoulli random variable and the relationship between covariance and the second moment. Subtracting the means to arrive at the covariance and rearranging, we get Equation \ref{eq_estimator_var}.

Observe that this function has vertical asymptotes at $\prvp = 0$ and $\prvp = 1$. We are interested in finding the minimum of the trace of the matrix w.r.t. $\prvp$ on the interval $\alpha\in (0, 1)$. Differentiating gives
\begin{align}
    \frac{\partial\mathrm{Tr}[\Sigma]}{\partial\prvp} &= -\frac{1}{\prvp^2}\mathrm{Tr}[\sigp] +\frac{1}{(1-\prvp)^2}\mathrm{Tr}[\sign] - \frac{1}{\prvp^2}\mathrm{Tr}[\mup\mup^\top] + \frac{1}{(1-\prvp)^2}\mathrm{Tr}[\mun\mun^\top] \nonumber \\
    &= -\frac{1}{\prvp^2}\mathrm{Tr}[\sigp + \mup\mup^\top] + \frac{1}{(1-\prvp)^2}\mathrm{Tr}[\sign + \mun\mun^\top]
\end{align}

Setting this equal to zero and solving for $\prvp$ using the quadratic formula gives:
\begin{align}
    \prvp = \frac{\mathrm{Tr}[\sigp + \mup\mup^\top] \pm \sqrt{\mathrm{Tr}[\sigp + \mup\mup^\top]\mathrm{Tr}[\sign + \mun\mun^\top]}}{\mathrm{Tr}[\sigp + \mup\mup^\top]-\mathrm{Tr}[\sign + \mun\mun^\top]}.
\end{align}

Some simple algebra shows that only the difference and not the sum in the above equation yields a critical point on the interval $(0, 1)$, so the difference is the minimum we desire. Finally, differentiating one more time gives:
\begin{align}
    \frac{\partial^2\mathrm{Tr}[\Sigma]}{\partial\prvp^2} &= \frac{2}{\prvp^3}\mathrm{Tr}[\sigp + \mup\mup^\top] + \frac{2}{(1-\prvp)^3}\mathrm{Tr}[\sign + \mun\mun^\top].
\end{align}

The trace of the sum of a covariance matrix and a vector outer product is positive, so this expression is positive on $(0, 1)$, thus we get a convex function of $\prvp$ on this interval.
\end{proof}

Lemma \ref{lem2-app} is proven by the proof of the above proposition. For Lemma \ref{lem1-app}, it suffices to note that if $\tp$ is statistically independent of $\tn$ then their covariance is given by $\sigp + \sign$. The proof of Lemma \ref{lem1-app} follows by the linearity of the trace and the assumption that $\mathrm{Tr}(\mup\mun^\top) = \mathrm{Tr}(\mun\mup^\top) = \mup^\top\mun \geq 0$.

\section{Bias of Constant Learning Rate for Equilibrium Learning Gradient Estimates}
\label{proof-of-th1}

We investigate an equilibrium learning system where, to estimate the gradient needed for one parameter update, $\gest(\params)$, one must run internal dynamics, following a Markov chain, on some activation space $\as$ until these dynamics are sufficiently close to an equilibrium.

We investigate the question of how much bias is introduced into the SGD dynamics of this equilibrium-learning system when, instead of fixing $\params$ during each phase of dynamics before each gradient step, one allows gradient steps to be taken at \emph{every step of the dynamics}. Specifically, we are interested in proving Theorem \ref{th1}.

In an abuse of notation we allow $\p$ to represent any probability density (or mass function) appearing in the proof, with each distribution being discerned by the arguments---e.g. $\p(\ac_t | \ac_0, \params) = \int_{\sz \times \dots \times \sz}\prod_{k=1}^{t}\p(\ac_k | \ac_{k-1}, \params) \dee \ac_1 \dots \dee \ac_{t-1}$ is a $t$ step transition kernel and $p(\ac_k|\ac_{k-1}, \params)$ is a single step transition. We adopt a discrete time framework with $t \in \mathbb{N}$ given that SGD is typically implemented in discrete time, though we expect similar results for continuous time. The proposed algorithm views each stimulus, or samples from its internal model, for $\pl$ time-steps and the learning rate at each time-step is scaled by a factor of $\frac{1}{\rn}$, which is the mean of $\pl$. We use $\params_t \in \prs \: \forall \: t$ to denote the parameter value at time $t$, and $\ac_t$ to denote the network state at this time. $\lr$ denotes the learning rate before scaling by the inverse of $\rn$.

\begin{theorem}
\label{th1-app}

Consider a single phase of equilibrium-based learning where the phase is either fixed to value $\rn$ or ends stochastically at each step with small probability $\frac{1}{\rn}$ and, instead of waiting until the end of the phase to do an update with learning rate $\lr$, an update is performed at every step of the equilibrium dynamics with learning rate $\frac{\lr}{\rn}$. Assume that equilibrium dynamics evolve for fixed $\params$ according to the Markov transition kernel $\p(\ac_{t}|\ac_{t-1}, \params)$. Further assume that $\gest(\params)$ is differentiable and bounded, with a bounded derivative, for all network state values, the Markov kernel for the equilibrium dynamics is continuously differentiable w.r.t. $\params$ for all network state values, the $k$-step kernel $\p(\ac_t|\ac_{t-k}, \params)$ is bounded w.r.t. $\ac_{t-1}$ for all parameter values, and the function $\extra(x)=\int_\sz \Big|\frac{\partial \p(x | y, \xi_\mathrm{max})}{\partial\params}\Big| \p(y | \ac_0, \params_0) \dee y$ is integrable for the given initial conditions and all values of $\xi_\mathrm{max}$. Finally, assume that the equilibrium dynamics converge to a stationary distribution weakly with a rate as least as fast as $\CO(\frac{1}{t^2})$. Then the bias of the learning updates are of the following order:
\begin{align}
    \CO\Big(\frac{\lr \transient}{\rn}\Big) + \CO(\lr^2 \rn ),
\end{align}

where $\transient = \rn^m$ for $0 < m < 1$ and $\CO$ is big-O notation.
\end{theorem}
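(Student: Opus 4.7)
The plan is to decompose the per-phase bias into a transient component (while the latent Markov chain has not yet approached stationarity) and a near-stationary component, then bound each separately. Let $\params_0$ and $\ac_0$ denote the parameter and network state at the start of the phase, and let $\pi_{\params_0}$ be the stationary distribution of $\p(\cdot\,|\,\cdot,\params_0)$. The ``ideal'' equilibrium-CL update being approximated is $-\lr\,\E_{\ac\sim\pi_{\params_0}}[\gest(\params_0,\ac)]$, while the actual expected cumulative update over the phase is $-(\lr/\rn)\sum_{t=0}^{\rn-1}\E[\gest(\params_t,\ac_t)]$, so the total bias equals $(\lr/\rn)\sum_{t=0}^{\rn-1}\Delta_t$ with
\begin{align*}
\Delta_t \;:=\; \E[\gest(\params_t,\ac_t)] - \E_{\ac\sim\pi_{\params_0}}[\gest(\params_0,\ac)].
\end{align*}
I would split the sum at $t=\transient=\rn^m$ and handle the two ranges separately.

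For $t<\transient$, boundedness of $\gest$ immediately gives $|\Delta_t|\le 2\|\gest\|_\infty$, so the transient contribution is $\CO(\lr\transient/\rn)$, which is exactly the first term of the theorem. For $t\ge\transient$ I would introduce a reference chain $\{\ac_t^\star\}$ coupled to $\{\ac_t\}$ through the same initial state $\ac_0$ but driven by the \emph{frozen} kernel $\p(\cdot\,|\,\cdot,\params_0)$ at every step, and then decompose
\begin{align*}
\Delta_t
&= \underbrace{\E[\gest(\params_t,\ac_t)-\gest(\params_t,\ac_t^\star)]}_{\text{(i) chain perturbation}}\\
&\quad+ \underbrace{\E[\gest(\params_t,\ac_t^\star)-\gest(\params_0,\ac_t^\star)]}_{\text{(ii) parameter drift inside the estimator}}\\
&\quad+ \underbrace{\E[\gest(\params_0,\ac_t^\star)]-\E_{\pi_{\params_0}}[\gest(\params_0,\cdot)]}_{\text{(iii) mixing of the reference chain}}.
\end{align*}
Term (iii) is $\CO(1/t^2)$ by the assumed weak-convergence rate, so summed against $\lr/\rn$ from $t=\transient$ to $\rn$ it contributes only $\CO(\lr/(\rn\transient))$, which is subdominant. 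Term (ii) uses the bounded partials of $\gest$ together with the elementary drift bound $\|\params_t-\params_0\|\le (\lr t/\rn)\|\gest\|_\infty$; summed against $\lr/\rn$ this contributes at most $\CO(\lr^2)$, again subdominant.

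The hard step is term (i), and this is where the integrability hypothesis on $\extra$ earns its keep. I would write the signed measure $\p(\cdot\,|\,\ac_0,\params_{0:t-1}) - \p(\cdot\,|\,\ac_0,\params_0)$ as a telescoping sum of one-step kernel swaps, bound each swap by a mean-value argument in $\params$ along a line from $\params_s$ to $\params_0$ of length $\CO(\lr s/\rn)$, and use the assumed integrability of $\extra$ together with the boundedness of the multi-step kernel to convert pointwise derivative estimates into a total-variation bound of order $\CO(\lr t^2/\rn)$ between the marginals of $\ac_t$ and $\ac_t^\star$. Multiplying by $\|\gest\|_\infty$ and the outer $\lr/\rn$ factor and summing $t^2$ from $\transient$ up to $\rn$ (which is $\CO(\rn^3)$) produces exactly the second theorem term $\CO(\lr^2\rn)$. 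To pass from the deterministic $\pl=\rn$ setting to the geometric $\pl$ with mean $\rn$, I would take an outer expectation over $\pl$: since $\Delta_t$ depends only on $t$ and not on $\pl$, and the geometric tails concentrate near the mean, the same bounds survive the averaging with $\rn$ replaced by $\E[\pl]=\rn$. The main technical obstacle I anticipate is exactly the telescoping coupling for term (i): carefully justifying the interchange of expectations, derivatives, and integrals in the swap-by-swap estimate is precisely what the regularity and integrability hypotheses on $\p$, $\partial\p/\partial\params$, and $\extra$ in the theorem statement are designed to enable.
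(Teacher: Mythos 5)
Your proposal is correct and follows essentially the same route as the paper's proof: the split at $\transient=\rn^m$ giving the $\CO(\lr\transient/\rn)$ transient term, the mean-value expansion of $\gest$ in $\params$ giving the subdominant $\CO(\lr^2)$ drift term, the telescoping one-step kernel swaps controlled by the integrability of $\extra$ giving the $\CO(\lr^2\rn)$ term, and the $\CO(1/t^2)$ weak-convergence bound for the mixing remainder all appear in the paper in the same roles (the paper phrases your coupling to a frozen-kernel reference chain as a recursive expansion of the marginal $\p(\ac_t|\ac_0,\params_0)$, which is the same telescoping estimate). The only point treated more carefully in the paper is the stochastic phase length, which is handled by conditioning on the event $\pl\geq\transient$ and bounding its complement's probability by $\CO(\transient/\rn)$ before the split at $\transient$ is made, rather than by a tail-concentration remark after the fact.
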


\textbf{Remark on dimension of parameters:} The following proof relies very heavily on the multivariate corollary of the mean value theorem, which we state in section \ref{sec:mvt} for completeness.

\subsection{Proof of Theorem \ref{th1}}

\begin{proof}

In what follows we restrict $t$ to the interval $[0, \pl)$---that is, one period of learning. $\pl$ can either be chosen deterministically or stochastically, as we will discuss below. The full dynamics of network state and parameter updates are given by the following equations:

\begin{equation}
\begin{aligned}
    \ac_{t+1} &\sim \p(\ac_{t+1}|\ac_t, \params_t) \\
    \params_{t+1} &= \params_t - \frac{\lr}{\rn}\gest(\ac_t, \params_t),
    \label{eq_dynamics}
\end{aligned}
\end{equation}

where $\gest$ is the gradient estimate for the model, $\p(\ac_{t+1}|\ac_t, \params)$ is, for fixed $\params$, the transition function of a homogeneous Markov chain that we assume converges uniquely to the stationary distribution of the model. Thus, if we set $\params_t=\params$ for all $t$, $\{\ac_t\}_{t=0}^\pl$ would be a Markov chain performing MCMC sampling from our probability model with parameter $\params$. However, given the dependence on $\params_t$, $\{\ac_t\}_{t=0}^\pl$ is not performing exact MCMC sampling and is, in fact, not even a Markov chain, due to the sequence of temporal dependencies induced by $\params_t$. Lastly, we note that, conditioned on $\ac_t$, $\params_{t+1}$ is a deterministic variable since the input stimulus is not changing over the period $[0, \pl)$ and the only source of randomness is via $\ac$.

Our goal is to prove that the sum of parameter changes over the phase $t \in [0, \pl)$, which we denote by the random variable $\gs(\ac_0, \params_0)$, is asymptotically unbiased in the limit of small learning rate, $\lr$, and long mean stimulus presentation time, $\rn$. That is, we will prove the following:

\begin{align}
    \mathbb{E}\big[\gs(\ac_0, \params)\big] = \lr\gest(\params) + \CO\Big(\frac{\lr \transient}{\rn}\Big) + \CO(\lr^2 \rn).
    \label{eq_main_result}
\end{align}

To this end, we simply linearize about $\params_0$ so as to remove the temporal dependence brought about by the continued learning. This allows us to make use of the convergence properties of the homogeneous Markov chain. We now proceed in three steps: (1) expand $\gest$ to remove the dependence of the gradient, via $\params_t$, on the slow learning timescale; (2) expand $\p(\ac_t|\ac_0, \params_0)$ to remove the rest of the expectation's dependence---via the system's internal dynamics---on the slow timescale; (3) leverage the convergence of the homogeneous Markov chain to the stationary distribution to obtain samples to approximate the desired expectations. Steps (1) and (2) jointly lead to the error term that is quadratic in $\lr$, in Equation \ref{eq_main_result}, and step (3) results in the other error term. Before embarking on the three main steps of the proof, we must first engage in a preliminary analysis to account for stochastic phase lengths.

\textbf{Preliminary for Stochastic Phase Length:} Using the law of total expectation (tower property), and defining the random variable $\pls = \mathbbm{1}_{\pl \geq \transient}$

\begin{equation}
\begin{aligned}
    \E\big[\gs(\ac_0, \params)\big] &= \E\big[\E[\gs(\ac_0, \params)|\pls]\big] = \E[\gs(\ac_0, \params)|\pls = 1]\p(\pls = 1) + \E[\gs(\ac_0, \params)|\pls = 0]\p(\pls = 0) \\
    &= \E[\gs(\ac_0, \params)|\pls = 1] + \big(\E[\gs(\ac_0, \params)|\pls = 0] - \E[\gs(\ac_0, \params)|\pls = 1]\big)\p(\pls = 0).
\end{aligned}
\end{equation}

Observe that, by the assumption that the gradient updates are bounded, $\gs(\ac_0, \params_0) \leq K_1\lr\pl/\rn$ for some constant $K_1$. Furthermore, it can be shown that $\p(\pls = 0) = \CO(\transient / \rn)$ (as we demonstrate in \S \ref{sec:prob-app}). Thus

\begin{equation}
\begin{aligned}
    \E\big[\gs(\ac_0, \params)\big] &= \E[\gs(\ac_0, \params)|\pls = 1] + \CO\Big(\frac{\lr \transient \E(\pl) }{\rn^2}\Big) = \E[\gs(\ac_0, \params)|\pls = 1] + \CO\Big(\frac{\lr \transient}{\rn}\Big),
    \label{eq-prob-analysis}
\end{aligned}
\end{equation}

where we have used $\E(\pl) = \rn$ (see \S \ref{sec:moments-of-pl}). 

What Equation \ref{eq-prob-analysis} means is that, for a large enough average phase length, we can assume that we are dealing with a value of $\pl$ that is larger than some deterministic value, $\transient$, even in the case of stochastic phase lengths--albeit with the above error term. Moreover, if we apply the tower property again, we get:

\begin{align}
    \E[\gs(\ac_0, \params)|\pls = 1] = \E_\pl\big[\E[\gs(\ac_0, \params)|\pl]|\pls = 1\big].
\end{align}

For the three main steps of the proof we will thus analyze the inner expectation, $\E[\gs(\ac_0, \params)|\pl]$ (in an abuse of notation we will refrain from explicitly writing the condition that $\pl \geq \transient$), the analysis of which is equivalent to the case of a deterministic phase length. At the end of the proof we will take into account the stochasticity in $\pl$. We now proceed to the three main steps of the proof.

\textbf{Step 1:} This section is a straightforward application of Lemma \ref{mvt-corr}. Observe that one can write the parameter dynamics as:
\begin{align}
    \params_t = \params_0 + \frac{\lr}{\rn}\sum_{k=0}^{t-1}\gest(\ac_k, \params_k),
\end{align}

and thus,
\begin{align}
    \gs(\ac_0, \params_0) = \frac{\lr}{\rn}\sum_{t=0}^{\pl-1}\gest(\ac_t, \params_t).
\end{align}

Let us focus on the $\vi^{th}$ element of $\gest$. Assuming that $\gest \in \CC^0(\params)$ (differentiable w.r.t. $\params$), we can expand it to first order about $\params_0$:
\begin{align}
    \gest_\vi(\ac_t, \params_t) = \gest_\vi\Big(\ac_t, \params_0 + \frac{\lr}{\rn}\sum_{k=0}^{t-1}\gest(\ac_k, \params_k)\Big) = \gest_\vi(\ac_t, \params_0) + \CO\Big(\frac{\lr}{\rn}\Big|\Big|\sum_{k=0}^{t-1}\gest(\ac_k, \params_k)\Big|\Big|_1\Big).
    \label{eq-20}
\end{align}

If we assume that the gradient is bounded (e.g. we clip the gradient above some sufficiently large value), and that the partial derivatives of the gradient w.r.t. $\params$ are bounded, then the second term on the RHS is $\CO\big(t\frac{\lr}{\rn}\big)$. Thus,
\begin{align}
    \gs(\ac_0, \params_0) &= \frac{\lr}{\rn}\sum_{t=0}^{\pl-1}\gest(\ac_t, \params_0) + \CO\Big(\frac{\lr^2}{\rn^2}\Big)\sum_{t=0}^{\pl-1}t.
\end{align}

The last factor in the final sum is order $\pl^2$ by the quadratic nature of the triangular summation.

\textbf{Step 2:} By linearity of expectation, and using the results of step 1,
\begin{align}
    \mathbb{E}\big[\gs(\ac_0, \params_0)|T \big] = \frac{\lr}{\rn}\sum_{t=0}^{\pl-1}\mathbb{E}\big[\gest(\ac_t, \params_0)\big] + \CO\Big(\lr^2\frac{T^2}{\rn^2}\Big).
    \label{S2_main}
\end{align}

Focusing on the $t^{\mathrm{th}}$ expectation on the RHS:
\begin{align}
    \mathbb{E}\big[\gest(\ac_t, \params_0)|T \big] = \int_{\sz} \gest(\ac_t, \params_0) p(\ac_t | \ac_0, \params_0) \dee \ac_t,     \label{eq_S2_prob_intermediate_1}
\end{align}

Focusing specifically on the probability, we get:

\begin{equation}
\begin{aligned}
    \p(\ac_t | \ac_0, \params_0) &= \int_{\sz}\int_{\prs} \p(\ac_t, \ac_{t-1}, \params_{t-1} | \ac_0, \params_0)\dee \params_{t-1} \dee \ac_{t-1} \\
    &= \int_{\sz}\int_{\prs} \p(\ac_t | \ac_{t-1}, \params_{t-1}, \ac_0, \params_0) \p(\ac_{t-1}, \params_{t-1} | \ac_0, \params_0) \dee \params_{t-1} \dee \ac_{t-1} \\
    &= \int_{\sz}\int_{\prs} \p(\ac_t | \ac_{t-1}, \params_{t-1}) \p(\ac_{t-1}, \params_{t-1} | \ac_0, \params_0) \dee \params_{t-1} \dee \ac_{t-1},
\end{aligned}
\end{equation}

where, in the final line, we used the dependence structure of the MCMC dynamics. 

Let us define $\Delta \params_t = \frac{\lr}{\rn}\sum_{k=0}^{t-1}\gest(\ac_k, \params_k)$ and use again our assumption that the gradient is bounded, so that in $t$ steps $||\Delta\params_t||_1 \leq \Delta_{\mathrm{max}}\params_t$, a deterministic quantity. Assume further that the transition function is differentiable w.r.t. $\params$ (in most cases this follows from our assumption that $\gest$ is differentiable). Then, the mean value theorem (see Lemma \ref{mvt-corr}), we have:

\begin{align}
    &= \int_{\sz}\int_{\prs} \bigg[\p(\ac_t | \ac_{t-1}, \params_0) + \Delta\params_{t-1} \cdot \frac{\partial \p(\ac_t | \ac_{t-1}, \xi_{t-1})}{\partial\params}\bigg] \p(z_{t-1}, \params_{t-1} | \ac_0, \params_0) \dee \params_{t-1} \dee \ac_{t-1}
    \label{eq-25}
\end{align}

where $\xi_{t-1}$ is in a cube of edge length $||\Delta\params_{t-1}||_1$ centered at $\theta_0$, and is a random variable on account of the variability in $\Delta\params_t$. We now proceed as follows

\begin{equation}
\begin{aligned}
    &= \int_{\sz}\int_{\prs} \p(\ac_t | \ac_{t-1}, \params_0) \p(\ac_{t-1}, \params_{t-1} | \ac_0, \params_0)\dee \params_{t-1} \dee \ac_{t-1} \\ &+ \int_{\sz}\int_{\prs} \Delta\params_{t-1} \cdot \frac{\partial \p(\ac_t | \ac_{t-1}, \xi_{t-1})}{\partial\params} \p(\ac_{t-1}, \params_{t-1} | \ac_0, \params_0) \dee \params_{t-1} \dee \ac_{t-1} \\
    &= \int_{\sz}\p(\ac_t | \ac_{t-1}, \params_0) \p(\ac_{t-1} | \ac_0, \params_0) \dee \ac_{t-1}     \label{eq_S2_prob_intermediate_2} \\ &+ \int_{\sz}\int_{\prs} \Delta\params_{t-1} \cdot \frac{\partial \p(\ac_t | \ac_{t-1}, \xi_{t-1})}{\partial\params} \p(\ac_{t-1}, \params_{t-1} | \ac_0, \params_0) \dee \params_{t-1} \dee \ac_{t-1}
\end{aligned}
\end{equation}

The second term on the RHS of the final line is a bias term, which we will now bound. Observe:

\begin{equation}
\begin{aligned}
    & \Bigg| \int_{\sz}\int_{\prs} \Delta\params_{t-1} \cdot \frac{\partial \p(\ac_t | \ac_{t-1}, \xi_t)}{\partial\params} \p(\ac_{t-1}, \params_{t-1} | \ac_0, \params_0) \dee \params_{t-1} \dee \ac_{t-1}\Bigg| \\
    & \leq \int_{\sz}\int_{\prs} \Delta_{\mathrm{max}}\params_{t-1}\Big|\Big|\frac{\partial \p(\ac_t | \ac_{t-1}, \xi_{t-1})}{\partial\params}\Big|\Big|_1 \p(\ac_{t-1}, \params_{t-1} | \ac_0, \params_0) \dee \params_{t-1} \dee \ac_{t-1},
\end{aligned}
\end{equation}

where the inequality follows from Jensen's inequality, to bring the absolute values inside the integral, and the use of $\Delta_\mathrm{max}\params_{t-1}$. Integrating out the parameters at time $t-1$ and applying $\Delta_{\mathrm{max}}\params_{t-1} \leq \Delta_{\mathrm{max}}\params_{\pl}$,

\begin{equation}
\begin{aligned} 
    &= \Delta_{\mathrm{max}}\params_{\pl} \int_{\sz} \Big|\Big|\frac{\partial \p(\ac_t | \ac_{t-1}, \xi_\mathrm{max})}{\partial\params}\Big|\Big|_1 \p(\ac_{t-1} | \ac_0, \params_0) \dee \ac_{t-1} \\ 
    &= \CO\Big(\lr\frac{T}{\rn}\Big)\extra(\ac_t),
    \label{eq_S2_prob_intermediate_3}
\end{aligned}
\end{equation}

where we have used the fact that $\CO(\Delta_{\mathrm{max}}\params_{\pl}) = \CO\big(\lr\frac{\pl}{\rn}\big)$. We have further assumed that the transition function is $\CC^1(\params)$, so that by continuity on the compact cube of edge length $\Delta_{\mathrm{max}}\params_{\pl}$ centered at $\theta_0$ we get $\Big|\Big|\frac{\partial \p(\ac_t | \ac_{t-1}, \xi_{t-1})}{\partial\params}\Big|\Big|_1 \leq \Big|\Big|\frac{\partial \p(\ac_t | \ac_{t-1}, \xi_{\mathrm{max}})}{\partial\params}\Big|\Big|_1$ for some value of $\xi_\mathrm{max}$. Lastly, we have simplified the expression with the function $\extra(x) = \int_\mathcal{Y} \Big|\Big|\frac{\partial \p(x | y, \xi_\mathrm{max})}{\partial\params}\Big|\Big|_1 \p(y | \ac_0, \params_0) \dee y$ defined in the theorem statement. We now have:
\begin{align}
    \p(\ac_t|\ac_0,\params_0) = \int_{\sz}\p(\ac_t | \ac_{t-1}, \params_0) \p(\ac_{t-1} | \ac_0, \params_0) \dee \ac_{t-1} + \CO\Big(\lr\frac{\pl}{\rn}\Big)\extra(\ac_t).
        \label{eq_S2_prob_intermediate_4}
\end{align}

If $t=2$ we stop here. For $t>2$, observe that the second factor in the integrand is simply the original transition function moved one time-step back in time, and that the steps we have gone through to arrive at this point work for an arbitrary time-step. That is, for $k \in \{2,\dots,\pl\}$, we have:
\begin{align}
    \p(\ac_k | \ac_0,\params_0) = \int_{\sz}\p(\ac_{k} | \ac_{k-1}, \params_0) \p(\ac_{k-1} | \ac_0, \params_0) \dee \ac_{k-1} + \CO\Big(\lr\frac{\pl}{\rn}\Big)\extra(\ac_k),
    \label{eq_s2_recursive}
\end{align}

This suggests taking a recursive approach to expanding $\p(\ac_t|\ac_0,\params_0)$, which is what we do now. Applying Equation \ref{eq_s2_recursive} to expand Equation \ref{eq_S2_prob_intermediate_4}, and writing $\p(\ac_t|\ac_{t-1}, \params_0) = \p(\ac_t|\ac_{t-1})$, we get:

\begin{equation}
\begin{aligned} 
    \p(\ac_t|\ac_0, \params_0) &= \int_{\sz \times \sz}\p(\ac_t | \ac_{t-1})\p(\ac_{t-1} | \ac_{t-2}) \p(\ac_{t-2} | \ac_0) \dee \ac_{t-2} \times \dee \ac_{t-1} \\
    &+ \int_{\sz}\p(\ac_t|\ac_{t-1})\CO\Big(\lr\frac{T}{\rn}\Big)\extra(\ac_{t-1})\dee \ac_{t-1} + \CO\Big(\lr\frac{\pl}{\rn}\Big)\extra(\ac_t).
\end{aligned}
\end{equation}

Applying the recursive Equation \ref{eq_s2_recursive} $t-3$ more times yields the following

\begin{equation}
\begin{aligned} 
    &= \int_{\sz} \dots \int_{\sz} \prod_{k=0}^{t-1}\p(\ac_{t-k} | \ac_{t-k-1}, \params_0) \dee \ac_{1} \dots \dee \ac_{t-1} \\
    &+\sum_{k=1}^{t-1} \int_{\sz}\p(\ac_t|\ac_{t-k})\CO\Big(\lr\frac{\pl}{\rn}\Big)\extra(\ac_{t-k})\dee \ac_{t-k} + \CO\Big(\lr\frac{\pl}{\rn}\Big)\extra(\ac_t).
    \label{eq_S2_fully_expanded}
\end{aligned}
\end{equation}

We now insert Equation \ref{eq_S2_fully_expanded} into Equation \ref{eq_S2_prob_intermediate_1}:
\begin{align}
    \mathbb{E}\big[\gest(\ac_t, \params_0)|T \big] &= \int_{\sz} \gest(\ac_t, \params_0) \int_{\sz} \dots \int_{\sz} \prod_{k=0}^{t-1}\p(\ac_{t-k} | \ac_{t-k-1}, \params_0) \dee \ac_{1} \dots \dee \ac_t \nonumber \\
    &+ \int_{\sz} \gest(\ac_t, \params_0) \sum_{k=1}^{t-1} \int_{\sz}\p(\ac_t|\ac_{t-k})\CO\Big(\lr\frac{\pl}{\rn}\Big)\extra(\ac_{t-k})\dee \ac_{t-k} \dee \ac_t \nonumber \\
    &+ \int_{\sz} \gest(\ac_t, \params_0) \CO\Big(\lr\frac{\pl}{\rn}\Big)\extra(\ac_t) \dee \ac_t \nonumber \\
    &= \int_{\sz} \gest(\ac_t, \params_0) \int_{\sz} \dots \int_{\sz} \prod_{k=0}^{t-1}\p(\ac_{t-k} | \ac_{t-k-1}, \params_0) \dee \ac_{1} \dots \dee \ac_t + \CO\Big(\lr \frac{T^2}{\rn}\Big),
\end{align}

where for the last equality we used our previous assumption that $\gest(\ac_t, \params_0)$ is bounded, along with the assumption that $\p(\ac_t|\ac_{t-k})$ is bounded w.r.t. $\ac_{t-k}$ and that $\extra(\ac_{t-k})$ is integrable, for $k \in \{0,\dots,t-1\}$.

Now inserting this in Equation \ref{S2_main} we get:
\begin{align}
    \mathbb{E}\big[\gs(\ac_0, \params_0)|T \big] &= \frac{\lr}{\rn}\sum_{t=0}^{\pl-1}\int_{\sz} \gest(\ac_t, \params_0) \p_{t}(\ac_t)\dee \ac_t + \CO\Big(\lr^2 \frac{\pl^3}{\rn^2}\Big) \nonumber \\
    &=\frac{\lr}{\rn}\sum_{t=0}^{\pl-1}\mathbb{E}_{\ac\sim \p_t}\big(\gest(\ac,\params)\big) + \CO\Big(\lr^2 \frac{\pl^3}{\rn^2}\Big),
\end{align}

where we have defined $\p_{t}(\ac_t) = \int_{\sz} \dots \int_{\sz} \prod_{k=0}^{t-1}\p(\ac_{t-k} | \ac_{t-k-1}, \params_0) \dee \ac_{1} \dots \dee \ac_{t-1}$.

\textbf{Step 3:} We now use the assumption that $\p_t$ converges weakly to our desired distribution $\p$, with a convergence rate that is $\CO(\frac{1}{t^2})$, to complete the proof.

Weak convergence, with the assumed rate, implies that for all bounded, measurable functions $f$, $|\mathbb{E}_{\p_t}(f) - \mathbb{E}_\p(f)| \leq \frac{M^*}{t^2}$, where $M^*$ is some positive constant. By assumption, the gradients are measurable and bounded, so we have:
\begin{align}
    \frac{\lr}{\rn}\sum_{t=0}^{\pl-1}\E_{\ac\sim \p_t}\big(\gest(\ac,\params)\big) &= \frac{\lr}{\rn}\sum_{t=\transient}^{\pl-1}\E_{\ac\sim \p_t}\big(\gest(\ac,\params)\big) + \frac{\lr}{\rn}\sum_{t=0}^{\transient-1}\E_{\ac\sim \p_t}\big(\gest(\ac,\params)\big) \nonumber \\
    &= \frac{\lr}{\rn}\sum_{t=\transient}^{\pl-1}\E_{\ac\sim \p}\big(\gest(\ac,\params)\big) + \frac{\lr}{\rn}\sum_{t=\transient}^{\pl-1}\CO\Big(\frac{M^*}{t^2}\Big) + \frac{\lr}{\rn}\sum_{t=0}^{\transient-1}\E_{\ac\sim \p_t}\big(\gest(\ac,\params)\big) \nonumber \\
    &= \frac{\pl-\transient}{\rn}\lr \E_{\ac\sim \p}\big(\gest(\ac, \params)\big) + \mathrm{\epsilon(\pl, \transient, \lr)},
\end{align}

where $\epsilon = \frac{\lr}{\rn}\sum_{t=\transient}^{\pl-1}\CO\Big(\frac{M^*}{t^2}\Big) + \frac{\lr}{\rn}\sum_{t=0}^{\transient-1}\E_{\ac\sim \p_t}\big(\gest(\ac,\params)\big)$ is the bias term. When $\pl$ is stochastic we can split up the sum in this way, using the deterministic variable $\transient$, because we originally conditioned on the fact that $\pl \geq \transient$ at the start of the proof. We now bound the bias. Observe that:
\begin{align}
    |\epsilon(\pl, \transient, \lr)| \leq \CO\Big(\frac{\lr}{\rn\transient}\Big) + \CO\Big(\frac{\lr \transient}{\rn}\Big) = \CO\Big(\frac{\lr \transient}{\rn}\Big),
\end{align}

where, for the inequality, we have used the fact that $\sum_{t=\transient}^\infty\frac{1}{t^2} \leq \frac{1}{\transient}$, and that $\gest$ is bounded. For the last equality one need simply note that $\transient > 1$. Lastly, observe that $\frac{T-\transient}{\rn} = \frac{\pl}{\rn} - \frac{\transient}{\rn}$, so the second term of this bias gets absorbed in the $\CO\big(\frac{\lr \transient}{\pl}\big)$ term.

Putting this together with the previous steps, we get:
\begin{align}
    \E\big[\gs(\ac_0, \params_0)\big|\pl] = \lr\frac{\pl}{\rn}\E_{\ac\sim \p}\big(\gest(\ac, \params)\big) + \CO\Big(\frac{\lr \transient}{\rn}\Big) + \CO\Big(\lr^2 \frac{\pl^3}{\rn^2}\Big).
\end{align}

And, finally, taking the expected value w.r.t. phase length yields
\begin{align}
    \E\big[\gs(\ac_0, \params_0)\big| \pls = 1] &= \lr\frac{\E(\pl | \pls = 1)}{\rn}\E_{\ac\sim \p}\big(\gest(\ac, \params)\big) + \CO\Big(\frac{\lr \transient}{\rn}\Big) + \CO\Big(\lr^2 \frac{\E(\pl^3 | \pls = 1)}{\rn^2}\Big) \nonumber \\
    &= \lr\E_{\ac\sim \p}\big(\gest(\ac, \params)\big) + \CO\Big(\frac{\lr \transient}{\rn}\Big) + \CO\Big(\lr^2 \rn \Big),
\end{align}

where we have used the results on the moments of $T$ from \S \ref{sec:moments-of-pl}. We note there that the $\CO(\lr\transient/\rn)$ term from the preliminary on stochastic phase length gets absorbed in the first bias term, for the stochastic phase length, yielding a convergence rate that is of the same order as with deterministic phase lengths. 

If we choose $\frac{1}{\lr}$ to be very large, $\rn$ to be less large, and $\transient$ to be less, but still sufficiently, large, we find that we can make the sum of the gradient steps during each phase arbitrarily close to being unbiased. For example, the scaling proposed in the theorem would be sufficient to achieve this.
\end{proof}

\subsection{Assumptions of Proof of Theorem \ref{th1}}

Finally, we comment on the validity of the assumption that $\extra(x)$ is integrable, and that the Markov chain exhibits weak convergence that is order $\frac{1}{t^2}$. The other assumptions we believe to be fairly mild restrictions on the regularity of the system.

For the assumptions on $\extra(x)$, we simply note that these will hold in cases where the functions involved are well-defined and $\sz$ is finite.

While we cannot prove that convergence to the stationary distribution is $\CO(\frac{1}{t^2})$ for every possible Markov chain we believe this holds for many Markov chains of interest. In particular, it holds for every finite state-space (finite $\sz$) Markov chain that is both irreducible and aperiodic. Every Markov chain satisfying these three desiderata (irreducible, aperiodic, finite state-space) also satisfies the following convergence rate result \cite{rosenthal1995convergence}:
\begin{align}
||p_t - p_\infty||_{\mathrm{tv}} \leq Ct^{l-1} \lambda^{t - l + 1}
\end{align}

where $||\cdot||_\mathrm{tv}$ is the total variation distance for probability measures, $\lambda$ is the second largest eigenvalue of the transition matrix for the Markov chain, and satisfies $\lambda < 1$, $C$ is a positive constant, and $l$ is the size of the largest Jordan block of the transition matrix. First, observe that convergence in total variation distance implies weak convergence. Second, this rate of convergence, which we define to be $r_1(t)$, is easily shown to be faster than $\CO(\frac{1}{t^2})$ = $r_2(t)$. To do so, it suffices to show that $\frac{r_1(t)}{r_2(t)} \to 0$. The ratio can be arranged as follows:
\begin{align}
\frac{r_1(t)}{r_2(t)} &= C_0 t^{l+1}\lambda^{t - l + 1} \nonumber \\ &= C_0 \frac{t^{l+1}}{\lambda^{l - t - 1}}.
\end{align}

Both numerator and denominator converge to $\infty$ as $t \to \infty$, and both are continuously differentiable $l + 1$ times, so we can apply L'Hopital's rule. Applying this rule and differentiating the top and bottom expressions $l + 1$ times gives:
\begin{align}
\lim_{t \to \infty}\frac{r_1(t)}{r_2(t)} = \lim_{t \to \infty} C_0 \frac{(l+1)!}{\lambda^{l-t-1}\big(-\ln(\lambda)\big)^{l+1}} = 0,
\end{align}

where the last equality follows because $\lambda < 1$.

\textbf{Remark:} We note that we have used the rather unsophisticated approach of simply Taylor expanding the parameter values during a wake/sleep phase around $\theta_0$, the value at the start of that phase, yielding a slowly converging error term. We expect that with more sophisticated methods, for example those outlined in \cite{yin2005discrete}, one might be able to establish a faster converging error rate.

\textbf{Remark 1:} The covariance requires an intractable integral to evaluate, so we once again appeal to MC sampling w.r.t $v$ and $h$ to evaluate it.

\textbf{Remark 2:} A minor complication with the above is that one requires more than one MC sample, w.r.t $v$ and $h$, to approximate the covariance function and, as such, one cannot use single sample MC sampling to estimate this gradient approximation.

\subsection{Multivariate Mean Value Theorem}
\label{sec:mvt}

In this section we include the multivariate mean value theorem as it features heavily in the proof of Theorem \ref{th1}.

\begin{theorem}
    Assume $f : \mathbb{R}^d \mapsto \mathbb{R}$ is a differentiable function. Then

    \begin{align}
        f(x + h) = f(x) + \nabla f(\xi) \cdot h
    \end{align}

    where $x, h \in \mathbb{R}^d$, and $\xi$ is a point on the line between $x$ and $x+h$.
\end{theorem}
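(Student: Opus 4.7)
The plan is to reduce the multivariate statement to the classical one-dimensional mean value theorem by restricting $f$ to the line segment joining $x$ and $x+h$. Specifically, I would define the auxiliary function $g:[0,1]\to\mathbb{R}$ by $g(t) = f(x + th)$. Since $t\mapsto x+th$ is an affine (hence smooth) map from $[0,1]$ into $\mathbb{R}^d$, and $f$ is differentiable by hypothesis, the composition $g$ is continuous on $[0,1]$ and differentiable on $(0,1)$. This collapses the multivariate problem onto a scalar function of a scalar argument, where the classical mean value theorem applies directly.

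Next, I would invoke the multivariate chain rule to compute $g'(t) = \nabla f(x + th)\cdot h$ for every $t\in(0,1)$. Applying the one-dimensional mean value theorem to $g$ then yields the existence of some $c\in(0,1)$ such that $g(1) - g(0) = g'(c)\cdot(1-0) = g'(c)$. Substituting the definition of $g$ and the formula for $g'$ gives
\begin{equation*}
f(x+h) - f(x) = \nabla f(x + ch)\cdot h.
\end{equation*}
Setting $\xi := x + ch$ produces a point on the open line segment between $x$ and $x+h$, yielding exactly the asserted identity after rearrangement.

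The proof is essentially routine, and I would not expect any serious obstacle: the one potentially subtle point is justifying the chain rule for $g'(t)$, which requires that $f$ be differentiable (in the Fr\'echet sense) at each point $x+th$, not merely that its partial derivatives exist. Since the hypothesis stated in the theorem is ``differentiable,'' I would interpret this in the standard (Fr\'echet) sense so that the chain rule applies without further assumptions. The remainder of the argument is simply a careful bookkeeping exercise to identify $\xi$ with a point on the segment, which is immediate from $c\in(0,1)$.
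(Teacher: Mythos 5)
Your proof is correct and is exactly the standard textbook argument that the paper itself defers to (the paper omits the proof, noting it ``can be found in any standard textbook''): restricting $f$ to the segment via $g(t)=f(x+th)$, computing $g'(t)=\nabla f(x+th)\cdot h$ by the chain rule, and applying the one-dimensional mean value theorem. Your remark that the chain rule requires differentiability in the Fr\'echet sense, not mere existence of partials, is the right subtlety to flag, and nothing further is needed.
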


We omit the proof as it can be found in any standard textbook. The following lemma follows from the above, and is used throughout the proof of Theorem \ref{th1}.

\begin{lemma}
    Assume that $f$ is $L$-Lipschitz (gradients of $f$ are bounded by $L$), and that $h \leq M \; \forall \: i$. Then it follows from the Mean Value Theorem that

    \begin{align}
        f(x + \eta h) = f(x) + \CO(\eta)
    \end{align}
    \label{mvt-corr}
\end{lemma}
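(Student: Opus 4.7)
The plan is to apply the Multivariate Mean Value Theorem stated just above and then bound the remainder term using the two hypotheses on $f$ and $h$.

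First, I would invoke the theorem directly with the increment $\eta h$ in place of $h$, which yields
\begin{align}
f(x + \eta h) = f(x) + \nabla f(\xi) \cdot (\eta h) = f(x) + \eta \, \nabla f(\xi) \cdot h,
\end{align}
for some $\xi$ on the segment between $x$ and $x + \eta h$. Pulling the scalar $\eta$ out of the inner product is the only nontrivial algebraic step, and it is immediate by linearity.

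Next, I would bound the inner product uniformly in $\xi$. Using the Cauchy--Schwarz inequality and the componentwise bound $|h_i| \leq M$ (which gives $\|h\| \leq M\sqrt{d}$), together with the Lipschitz assumption $\|\nabla f(\xi)\| \leq L$ at every point, we obtain
\begin{align}
|\nabla f(\xi) \cdot h| \leq \|\nabla f(\xi)\| \, \|h\| \leq L M \sqrt{d}.
\end{align}
Since $d$, $L$, and $M$ are all constants independent of $\eta$, this gives $|\eta \, \nabla f(\xi) \cdot h| \leq C \eta$ for some constant $C$, which is precisely the statement $f(x + \eta h) = f(x) + \mathcal{O}(\eta)$ as $\eta \to 0$.

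There is no real obstacle here: the lemma is an immediate specialization of the Mean Value Theorem, with the Lipschitz assumption ensuring that the gradient factor in the remainder stays bounded uniformly on the line segment (which in particular lies in a compact set as $\eta$ varies over any bounded interval), and the componentwise bound on $h$ ensuring the increment direction has bounded norm. The only judgement call is which norm to use when bounding the inner product; Cauchy--Schwarz with the Euclidean norm is the cleanest, but Hölder with $\|\cdot\|_\infty$ on the gradient and $\|\cdot\|_1$ on $h$ gives an equally valid constant.
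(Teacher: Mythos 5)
Your proposal is correct and is exactly the argument the paper intends: the paper omits the proof, stating only that the lemma ``follows from'' the multivariate mean value theorem, and your derivation---apply the theorem with increment $\eta h$, pull out $\eta$, and bound $|\nabla f(\xi)\cdot h|$ uniformly by a constant such as $LM\sqrt{d}$ using the Lipschitz bound on the gradient and the componentwise bound on $h$---is precisely the routine specialization being invoked. No gaps; your closing remark that the choice of norm (Cauchy--Schwarz vs.\ H\"older) only changes the constant is also correct and immaterial to the $\CO(\eta)$ conclusion.
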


\subsection{Convergence Rate of Error Induced by Stochastic Phase Lengths}
\label{sec:prob-app}

We wish to derive the rate at which this term converges to zero:

\begin{equation}
        \big(\E[\gs(\ac_0, \params)|\pls = 0] - \E[\gs(\ac_0, \params)|\pls = 1]\big)\p(\pls = 0).
        \label{eq-stoch-bias}
\end{equation}

We first consider the expression inside the brackets. Using the assumption that the gradients are bounded we get the following

\begin{equation}
    \begin{aligned}
        &\big(\E[\gs(\ac_0, \params)|\pls = 0] - \E[\gs(\ac_0, \params)|\pls = 1]\big) \leq \frac{A_1\transient\lr}{\rn} + \frac{A_2\E(\pl|\pls=1)\lr}{\rn} = \CO(\lr),
    \end{aligned}
\end{equation}

where $A_1$ and $A_2$ are constants. For the second term, we note that $\p(\pls=0) = 1 - \p(\pls=1) = 1 - \p(\trv = 0)^\transient = 1 - (1 - \trvp)^\transient$. By the definitions $\transient = \rn^m$ and $\trvp = \frac{1}{\rn}$, we get

\begin{equation}
    \p(\pls=0) = 1 - \Big(1 - \frac{1}{\rn}\Big)^{\rn^m}.
\end{equation}

Because $1$ is a constant, $\p(\pls = 0)$ goes to zero at the same rate that $(1 - \frac{1}{\rn})^{\rn^m}$ goes to $1$. We will now derive this rate.

\begin{equation}
    \begin{aligned}
        \Big(1 - \frac{1}{\rn}\Big)^\transient &= \exp\Big(\transient\log\Big[1 - \frac{1}{\rn}\Big]\Big) = \exp\Big(- \transient\Big[\frac{1}{\rn} + \CO\Big(\frac{1}{\rn^2}\Big)\Big]\Big) \\ 
        &= \exp\Big(- \frac{\transient}{\rn}\Big)\exp\Big( - \CO\Big(\frac{\transient}{\rn^2}\Big)\Big) = \Big[1 - \frac{\transient}{\rn} + \CO\Big(\frac{\transient^2}{\rn^2}\Big)\Big]\Big[1 - \frac{\transient}{\rn^2} + \CO\Big(\frac{\transient^2}{\rn^4}\Big)\Big] \\
        & = 1 - \frac{\transient}{\rn} + K_0(\transient, \rn).
    \end{aligned}
\end{equation}

Here, we have expanded the inner logarithm about $1$, given that $1 - \frac{1}{\rn}$ goes to $1$, and the exponential about $0$, given that $\transient\log\Big[1 - \frac{1}{\rn}\Big]$ goes to zero (because $\transient = \rn^m$ by definition). $K_0$ is thus a function purely of higher order terms than $\frac{\transient}{\rn}$. Thus, with leading order, $\p(\pls = 0)$ goes to zero with rate $\frac{\transient}{\rn}$.

Combining this result with the previous one gives that Equation \ref{eq-stoch-bias} is order $\CO\big(\frac{\lr\transient}{\rn}\big)$.

\subsection{Moments of $\pl$}
\label{sec:moments-of-pl}

Assume that for every time step on a discrete timeline one evaluates the Bernoulli random variable $\trv$ with parameter $\trvp << 1$. Assuming that, at time $t_0$, $\trv = 1$, one can define the random variable $\pl$ that is the number of time steps until the next observance of $\trv=1$. This is precisely the way that the stochastic phases are defined in the theorem. One can solve recursively for the moments of this random variable, using the law of total expectation and conditioning on the event that $\pl=1$. Define $\rn = \frac{1}{\trvp}$. Then the first three moments are found to be:

\begin{equation}
\begin{aligned}
    \E(\pl) &= \rn \\
    \E(\pl^2) &= 2\rn^2 - \rn \\
    \E(\pl^3) &= 6\rn^3 - 6\rn^2 - \rn.
\end{aligned}
\end{equation}

In the proof of Theorem \ref{th1}, and it's deterministic analog (see \S \ref{sec:th1d-app}), we are particularly interested in the moments conditioned on $\pls=1$. Because the Bernoulli random variable that decides whether the phase ends is sampled independently at each step, we get the following relations:

\begin{align}
    \E(\pl|\pls=1) &= \E(\pl+\transient) = \rn + \transient \\
    \E(\pl^2|\pls=1) &= \E[(\pl+\transient)^2] = \rn^2 + 2\rn\transient + \transient^2 \\
    \E(\pl^3|\pls=1) &= \E[(\pl+\transient)^3] = \rn^3 + 3\rn^2\transient + 3\rn\transient^2 + \transient
\end{align}

Importantly, these first and third moments are of leading order $1$, $2$ and $3$ w.r.t $\rn$. These observations are used in the proof of Theorem \ref{th1} and Theorem \ref{th1d-app}.

\section{Analog of Theorem \ref{th1} for Deterministic Equilibrium Dynamics}
\label{sec:th1d-app}

\begin{theorem}
\label{th1d-app}

Consider a single phase of equilibrium-based learning where the phase is either fixed to value $\rn$ or ends stochastically at each step with small probability $\frac{1}{\rn}$ and, instead of waiting until the end of the phase to do an update with learning rate $\lr$, an update is performed at every step of the equilibrium dynamics with learning rate $\frac{\lr}{\rn}$. Assume that equilibrium dynamics evolve for fixed $\params$ according to the homogeneous discrete time dynamical system given by the map $F(\ac, \params)$. Further assume that $\gest(\params)$ is differentiable and bounded with bounded partial derivatives for all network state values, and that $F$ is differentiable w.r.t. $\ac$ and $\params$, also with bounded partial derivatives. Finally, assume that the equilibrium dynamics converge to a fixed point at a rate at least as fast as $\CO(\frac{1}{t^2})$. Then the bias of the learning updates are of the following order:
\begin{align}
    \CO\Big(\frac{\lr \transient}{\rn}\Big) + \CO(\lr^2),
\end{align}

where $\transient = \rn^m$ for $0 < m < 1$ and $\CO$ is big-O notation.
\end{theorem}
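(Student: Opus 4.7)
The plan is to adapt the three-step structure of the proof of Theorem \ref{th1} to the deterministic setting, exploiting the fact that here we do not need to linearize a transition kernel but instead can directly track trajectory perturbations through the map $F$. As in Theorem \ref{th1}, I would begin with the preliminary reduction to a deterministic phase length: conditioning on the event $\pls = \mathbbm{1}_{\pl \geq \transient}$ and using the tower property, the boundedness of $\gest$, and the estimate $\p(\pls = 0) = \CO(\transient/\rn)$ proven in Appendix \S \ref{sec:prob-app}, lets me replace $\E[\gs(\ac_0,\params_0)]$ by $\E[\gs(\ac_0,\params_0) \mid \pls = 1]$ at the cost of $\CO(\lr\transient/\rn)$, which is already absorbed into the claimed bound.

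Next, in analogy to Step 1 of the proof of Theorem \ref{th1}, I would define the ``frozen-parameter'' reference trajectory $\ac_t^\star$ by $\ac_{t+1}^\star = F(\ac_t^\star, \params_0)$ with $\ac_0^\star = \ac_0$, and compare it to the true trajectory $\ac_{t+1} = F(\ac_t, \params_t)$. Since $\|\params_t - \params_0\|_1 = \CO(\lr t/\rn)$ from the gradient recursion and the boundedness of $\gest$, the multivariate mean value theorem (Lemma \ref{mvt-corr}) applied to $F$, combined with the boundedness of its partial derivatives in both arguments, yields a discrete Gronwall-type bound $\|\ac_t - \ac_t^\star\| = \CO(\lr t/\rn)$ uniformly for $t \leq \pl$. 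Applying Lemma \ref{mvt-corr} once more to $\gest$, which is differentiable with bounded partials, gives $\gest(\ac_t,\params_t) = \gest(\ac_t^\star,\params_0) + \CO(\lr t/\rn)$. Summing over $t \in \{0,\dots,\pl-1\}$ and multiplying by $\lr/\rn$ produces a remainder of order $\CO(\lr^2 \pl^2/\rn^2)$; after taking $\E[\cdot \mid \pls=1]$ and using $\E(\pl^2 \mid \pls=1) = \CO(\rn^2)$ from \S \ref{sec:moments-of-pl}, this remainder becomes exactly $\CO(\lr^2)$, which is the source of the improvement over Theorem \ref{th1}.

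It then remains to handle the principal term $\frac{\lr}{\rn}\sum_{t=0}^{\pl-1}\gest(\ac_t^\star,\params_0)$, which plays the role of Step 3. Using the assumed $\CO(1/t^2)$ convergence of the deterministic dynamics to their fixed point $\ac^\star$, together with the bounded Lipschitz property of $\gest$, I split the sum at $t=\transient$: the initial piece is $\CO(\lr\transient/\rn)$ and the tail is $\frac{\lr(\pl-\transient)}{\rn}\gest(\ac^\star,\params_0) + \frac{\lr}{\rn}\sum_{t=\transient}^{\pl-1}\CO(1/t^2)$, where the last error is $\CO(\lr/(\rn\transient))$ and is dominated by $\CO(\lr\transient/\rn)$. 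Taking the expectation in $\pl$ and invoking $\E(\pl \mid \pls=1) = \rn + \transient$ gives the desired $\lr\gest(\ac^\star,\params_0)$ as the unbiased target, with residual error $\CO(\lr\transient/\rn) + \CO(\lr^2)$.

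The main obstacle I anticipate is the rigorous Gronwall-type estimate $\|\ac_t - \ac_t^\star\| = \CO(\lr t/\rn)$: one must show that the error compounds only linearly in $t$, not exponentially, which requires using the boundedness of $\partial F/\partial \ac$ carefully rather than a naive Lipschitz bound that would blow up. A standard way forward is to write $\ac_{t+1} - \ac_{t+1}^\star = [F(\ac_t,\params_t) - F(\ac_t^\star,\params_t)] + [F(\ac_t^\star,\params_t) - F(\ac_t^\star,\params_0)]$ and iterate; the first bracket contributes a Lipschitz factor $L$ and the second is $\CO(\lr t/\rn)$, so unless one exploits contraction near the fixed point the bound would be $\CO((\lr t/\rn)\sum_{k<t}L^k)$. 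This is acceptable when the dynamics are contractive (which is implied by $\CO(1/t^2)$ weak convergence under mild additional structure), but the cleanest rigorous treatment will require a careful separation of the transient regime $t < \transient$ (where a uniform Lipschitz bound suffices) from the asymptotic regime $t \geq \transient$ (where contraction kicks in). All other steps are direct analogs of the computations in \S \ref{proof-of-th1} and should go through routinely.
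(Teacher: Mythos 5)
Your proposal follows essentially the same route as the paper's proof: the same preliminary reduction via $\pls$ and $\p(\pls=0)=\CO(\transient/\rn)$, the same comparison of the perturbed trajectory to the frozen-parameter trajectory via Lemma \ref{mvt-corr} (the paper proves $\tac_t = \ac_t + \CO\big(\frac{(t-1)\lr}{\rn}\big)$ by induction, which is your $\|\ac_t-\ac_t^\star\|=\CO(\lr t/\rn)$ bound), the same $\CO(\lr^2\pl^2/\rn^2)\to\CO(\lr^2)$ accounting through $\E(\pl^2\mid\pls=1)$, and the same split of the sum at $t=\transient$. The one obstacle you flag is real and is in fact left unaddressed by the paper's own argument: the induction hypothesis $\tac_t=\ac_t+\CO\big(\frac{(t-1)\lr}{\rn}\big)$ silently multiplies the implied constant by the Lipschitz bound $L$ of $\partial F/\partial\ac$ at every step, so the deviation is genuinely linear in $t$ only when $F$ is non-expansive or contractive; otherwise the recursion $C_t \le L C_{t-1} + \CO(t\lr/\rn)$ gives $C_t = \CO(L^t \lr/\rn)$. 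Your proposed repair is the right idea, with one simplification available: for $t<\transient$ no trajectory comparison is needed at all, since the boundedness of $\gest$ alone bounds that portion of the sum by $\CO(\lr\transient/\rn)$, so contraction (or non-expansiveness) is only required on the post-transient segment where the dynamics are near the fixed point --- making your version of the argument somewhat more careful than the one in the paper.
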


\begin{proof}
The proof has the same structure as that of Theorem \ref{th1} and, accordingly, proceeds in three steps preceded by a preliminary comment on stochastic phase length.

\textbf{Comment on Stochastic Phase Length}
Exactly the same as the comment in the proof of Theorem \ref{th1}.

\textbf{Step 1:} Exactly the same as ``step 1" from the proof of Theorem \ref{th1}.

\textbf{Step 2:} This step uses a proof by induction. Following step 1, we have

\begin{equation}
    \begin{aligned}
        \E\big[\gs(\ac_0, \params_0)|\pl\big] = \frac{\lr}{\rn}\sum_{t=0}^{\pl-1}\gest(\tac_t, \params_0) + \CO\Big(\lr^2\frac{\pl^2}{\rn^2}\Big),
    \end{aligned}
    \label{th1d-1}
\end{equation}

where $\{\tac_t\}_{t=0}^{t=\pl-1}$ are the dynamics perturbed by the always learning updates to $\params$. We wish to demonstrate that this represents merely a negligible perturbation of the homogeneous dynamics $\{\ac_t\}_{t=0}^{t=\pl-1}$. Let us assume that $F(\ac, \params)$ is differentiable w.r.t. $\ac$ and $\params$ and that all partial derivatives are bounded by $L$ (thus, that it is $L$-Lipschitz). Then we have

\begin{equation}
    \begin{aligned}
        \tac_0 &= \ac_0 \\
        \tac_1 &= F(\tac_0, \params_0) = F(\ac_0, \params_0) = \ac_1 \\
        \tac_2 &= F(\tac_1, \params_1) = F\Big(\ac_1, \params_0 + \frac{\lr}{\tau}\gest(\ac_0, \params_0)\Big) = F(\ac_1, \params_0)  + \CO\Big(\frac{\lr}{\tau}\Big) = \ac_2 + \CO\Big(\frac{\lr}{\rn}\Big), 
    \end{aligned}
    \label{eq-basecase}
\end{equation}

where we used the multivariate mean value theorem corollary Lemma \ref{mvt-corr} and the assumption that $\gest$ is bounded for the third equality in the final line. The final line of Equation \ref{eq-basecase} will be the ``base case" in our proof by induction. We make the following induction hypothesis:

\begin{equation}
    \begin{aligned}
        \tac_t &= \ac_t + \CO\bigg(\frac{(t-1)\lr}{\rn}\bigg).        
    \end{aligned}
    \label{eq-z-relationship}
\end{equation}

We now complete the proof by induction---via the ``induction step" (proof of the $t+1$ case)---to show that the equality given in Equation \ref{eq-z-relationship} holds for arbitrary $t$:

\begin{equation}
    \begin{aligned}
        \tac_{t+1} &= F(\tac_t, \params_t) = F\Bigg(\ac_t + \CO\bigg(\frac{(t-1)\lr}{\rn}\bigg), \params_0 + \frac{\lr}{\rn}\sum_{k=0}^{t-1}\gest(\tac_k, \params_k)\Bigg) \\
        &= F(\ac_t, \params_0) + \CO\bigg(\frac{t\lr}{\rn}\bigg).
    \end{aligned}
\end{equation}

In the second equality of the first line we used the induction hypothesis to relate $\tac_t$ to $\ac_t$; we also used the definition of the parameter dynamics. For the final line we again used the assumption that $F$ is differentiable and Lipschitz, and the assumption that $\gest$ is bounded. We also used the fact that the $\CO\Big(\frac{(t-1)\lr}{\rn}\Big)$ term is also $\CO\Big(\frac{t\lr}{\rn}\Big)$.

Assuming $\gest$ has bounded partial derivatives w.r.t. $\ac$ and applying Lemma \ref{mvt-corr} we get:

\begin{equation}
    \begin{aligned}
        \gest(\tac_t, \params_0) = \gest(\ac_t, \params_0) + \CO\bigg(\frac{t\lr}{\rn}\bigg),
    \end{aligned}
\end{equation}

so that, applying these results to Equation \ref{th1d-1}, we get:

\begin{equation}
    \begin{aligned}
        \E\big[\gs(\ac_0, \params_0)|\pl\big] &= \frac{\lr}{\rn}\sum_{t=0}^{\pl-1}\gest(\ac_t, \params_0) + \frac{\lr}{\rn}\sum_{t=0}^{\pl-1}\CO\bigg(\frac{t\lr}{\rn}\bigg) + \CO\Big(\lr^2\frac{\pl^2}{\rn^2}\Big) \\
        &= \frac{\lr}{\rn}\sum_{t=0}^{\pl-1}\gest(\ac_t, \params_0) + \CO\Big(\lr^2\frac{\pl^2}{\rn^2}\Big).
    \end{aligned}
\end{equation}

\textbf{Step 3:} This step proceeds in essentially the same way as step 3 of the proof of the Markov dynamics version. We leverage the assumption that $\ac_t$ converges to a fixed point of $\ac_\infty$ with a convergence rate that is $\CO(\frac{1}{t^2})$:

\begin{align}
    \frac{\lr}{\rn}\sum_{t=0}^{\pl-1}\gest(\ac_t,\params) &= \frac{\lr}{\rn}\sum_{t=\transient}^{\pl-1}\gest(\ac_t,\params) + \frac{\lr}{\rn}\sum_{t=0}^{\transient-1}\gest(\ac_t,\params) \nonumber \\
    &= \frac{\lr}{\rn}\sum_{t=\transient}^{\pl-1}\gest(\ac_\infty,\params) + \frac{\lr}{\rn}\sum_{t=\transient}^{\pl-1}\CO\Big(\frac{1}{t^2}\Big) + \frac{\lr}{\rn}\sum_{t=0}^{\transient-1}\gest(\ac_t,\params) \nonumber \\
    &= \frac{\pl-\transient}{\rn}\lr \gest(\ac_\infty, \params) + \mathrm{\epsilon(\pl, \transient, \lr)}.
\end{align}

On line two of the above, we used our assumptions on the differentiability and boundedness of the partial derivatives of $\gest$ w.r.t. $\ac$, to apply Lemma \ref{mvt-corr} yet again. We also used the variable $\transient$ defined in the theorem, and defined $\epsilon = \frac{\lr}{\rn}\sum_{t=\transient}^{\pl-1}\CO\Big(\frac{1}{t^2}\Big) + \frac{\lr}{\rn}\sum_{t=0}^{\transient-1}\gest(\ac,\params)$ as the bias term. Importantly, we have also taken $\transient$ large enough so that, for $t \geq \transient$, $\ac_t$ is sufficiently close to $\ac_\infty$ to allow us to expand $\gest$ around $\ac_\infty$. We now bound the bias term.

Observe that:
\begin{align}
    |\epsilon(\pl, \transient, \lr)| \leq \CO\Big(\frac{\lr}{\rn\transient}\Big) + \CO\Big(\frac{\lr \transient}{\rn}\Big) = \CO\Big(\frac{\lr \transient}{\rn}\Big),
\end{align}

where, for the inequality, we have used the fact that $\sum_{t=\transient}^\infty\frac{1}{t^2} \leq \frac{1}{\transient}$, and that $\gest$ is bounded. For the last equality one need simply note that $\transient > 1$. Lastly, observe that $\frac{T-\transient}{\rn} = \frac{\pl}{\rn} - \frac{\transient}{\rn}$, so the second term of this bias gets absorbed in the $\CO\big(\frac{\lr \transient}{\pl}\big)$ term.

Putting this together with the previous steps, we get:
\begin{align}
    \E\big[\gs(\ac_0, \params_0)\big|\pl] = \lr\frac{\pl}{\rn}\gest(\ac_\infty, \params) + \CO\Big(\frac{\lr \transient}{\rn}\Big) + \CO\Big(\lr^2 \frac{\pl^2}{\rn^2}\Big).
\end{align}

And, finally, taking the expected value w.r.t. phase length yields
\begin{align}
    \E\big[\gs(\ac_0, \params_0)\big|\pls = 1] &= \lr\frac{\E(\pl|\pls=1)}{\rn}\gest(\ac_\infty, \params) + \CO\Big(\frac{\lr \transient}{\rn}\Big) + \CO\Big(\lr^2 \frac{\E(\pl^2|\pls=1)}{\rn^2}\Big) \nonumber \\
    &= \lr\gest(\ac_\infty, \params) + \CO\Big(\frac{\lr \transient}{\rn}\Big) + \CO(\lr^2),
\end{align}

where we have used the results on the moments of $T$ from \S \ref{sec:moments-of-pl}. If we choose $\lr$ to be large, $\rn$ to be large, and $\transient$ to be less large than $\rn$, but still sufficiently large for convergence of the dynamics, we find that we can make the sum of the gradient steps during each phase arbitrarily close to being unbiased. For example, the scaling proposed in the remark after Theorem \ref{th1} would be sufficient to achieve this.

\end{proof}

\textbf{Remark on Difference Between Stochastic and Deterministic Dynamics} Interestingly, we find that the bias for deterministic dynamics is lower than the bias for stochastic dynamics. Furthermore, for deterministic dynamics we don't need the learning rate to be of lower order than $\rn$ or $\transient$. The reason for these phenomena is the change from a second bias term of $\CO(\lr^2\pl)$ for stochastic dynamics to $\CO(\lr^2)$ for deterministic dynamics. Mathematically, this change occurs because of the difference in how one removes the inhomogeneity in the dynamics for the stochastic versus the deterministic case: one has to expand a \emph{product} of functions (inside an integral) in the stochastic case, compared to expanding within a \emph{composition} of functions in the deterministic case.

It is an open question whether this difference in bias between deterministic and stochastic dynamics is a fundamental property or is purely a consequence of the techniques used in the proof. In particular, we wonder whether the stochastic bias could be reduced to that of the deterministic dynamics if one were to use techniques from operator theory \cite{lasota1998chaos, yin2005discrete}. We leave this to future work.

\section{Algorithm Performance as a Function of Positive Phase Probability for MNIST}
\label{sec:mnist-fig}

\begin{figure}[ht!]
\vskip 0.2in
\begin{center}
\centerline{\includegraphics[width=0.6\columnwidth]{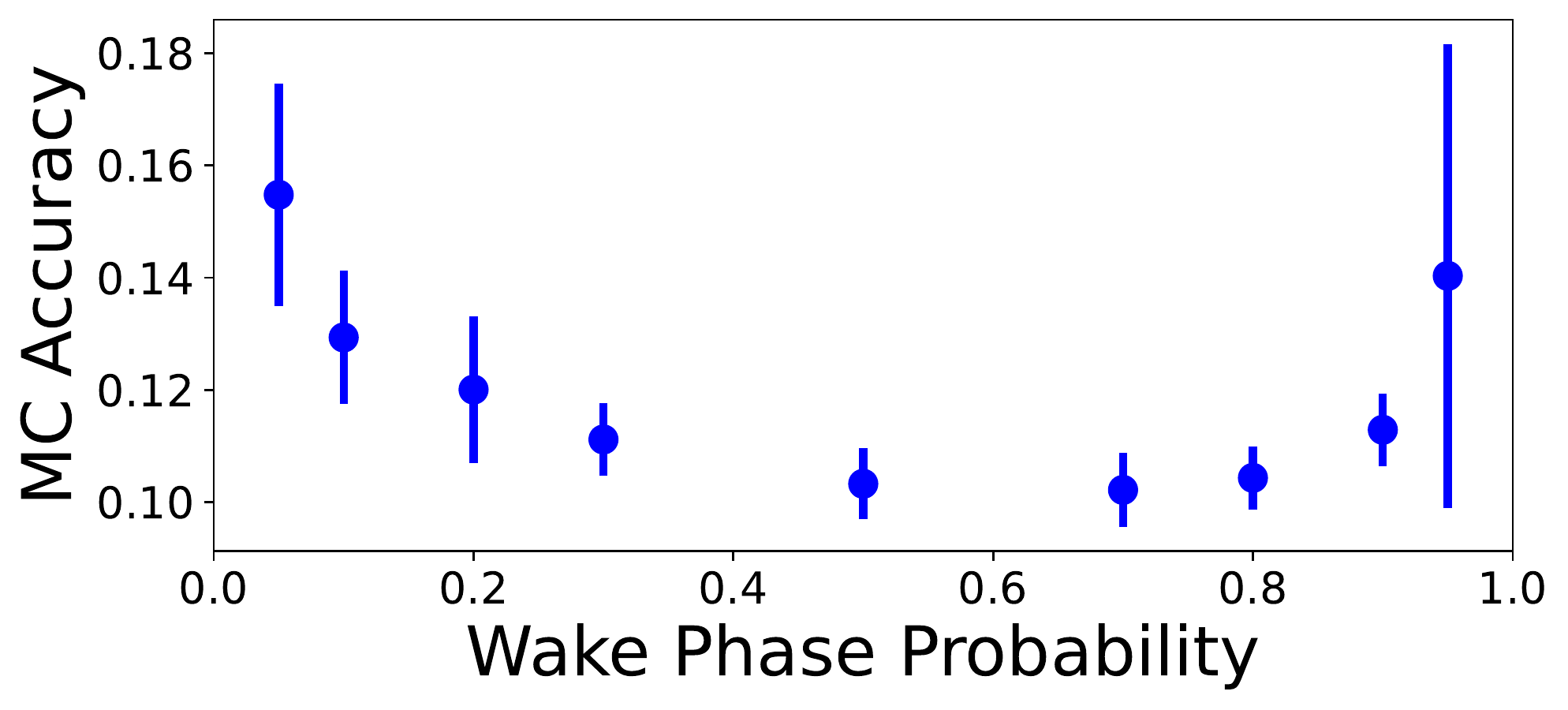}}
\caption{Same as Fig.\ref{res-fig2}.B except only the AoL values are shown, and the model is an RBM trained on binarized MNIST. Observe that the minimal value of $\prvp$ is higher than $0.5$. For this experiment, the RBM was trained on the concatenation of the images and their labels and the error plotted is classification error calculated, for a given image, by presenting the image without the label and selecting the predicted image label as the one with highest mean over a set of samples from the RBM. The RBM had 794 visible units and 128 hidden, and was trained with a learning rate of $0.005$.}
\label{res-fig-app}
\end{center}
\vskip -0.2in
\end{figure}

\section{Experimental Details}
\label{exp_details}
\paragraph{General:}
For the experiments in Fig.\ref{res-fig1-3}, and all distinct models, learning rates were selected by performing a line search over ten values, equally spaced on a log10 scale, to find the learning rate with lowest end-of-training training error. The learning rates in Fig.\ref{res-fig2} were simply set to $0.05$ and $0.0025$, for ISD AoL and ISD respectively, as the goal of this figure was not to compare algorithms and these learning rates were found to yield robust learning on a reasonable time frame. Network sizes were fixed to provide easy comparison across models.
Data was divided into segregated train and test sets for the MNIST experiments, but the test set was only used with the forward-forward algorithm. The BAS dataset is small and the entire, ground-truth data was trained on; therefore there was no test/training split in this case.
The code used for the experiments can be found at the following Github repository: \hyperlink{https://github.com/zek3r/ICML2023}{https://github.com/zek3r/ICML2023}.

\paragraph{RBM Parameters:}
For all experiments on the BAS dataset we trained a RBM with 16 hidden units and 16 visible units. The max and min values for the learning rate line search for Fig.\ref{res-fig1-3} were $0.04$ and $0.001$. Phase length was defined $\pl = 100$ for ISD, 100 for ISD AoL with fixed phase length, and was assigned a mean of 150 for the random phase length version. All networks were trained using vanilla SGD with no regularization. Initialization of all parameters was to white noise with standard deviation of $0.01$. All training runs in Fig.\ref{res-fig1-3} were $10^5$ steps long, except for CD1 which was $10^7$ gradient steps long (and failed to converge). Here, a step is defined as a gradient step for CDK and regular ISD, and a phase for ISD AoL and ISD AoL Random $\pl$. The reason that phase was used instead of gradient step for these latter two algorithms is that these algorithms performed a gradient step at every step of MCMC sampling but, as mentioned in Section \ref{sec:3.2}, learning rates were scaled by the mean phase length so that the sum of gradient steps taken during a phase was approximately equal to the single gradient step taken in regular ISD.

\paragraph{FF Parameters:}
With the forward-forward algorithm, we trained a network with two hidden layers of 500 units each. ADAM was used for both the standard algorithm and the ISD algorithm. The max and min values for the learning rate line search were $0.05$ and $0.002$. All training runs in Fig.\ref{res-fig1-3}, for all algorithms, were $120000$ gradient steps long.

\section{Ethics}

\paragraph{Energy Consumption:}
Given the scale of the experiments, we believe the energy consumption of this project was negligible. Moreover, the experiment was performed on an electrical grid that primarily uses hydro-electricity from long-established generating stations, meaning that the greenhouse gas emissions were likely close to zero.

\paragraph{Other Considerations:}
The main application of this work is to build basic scientific knowledge to aid computational neuroscience modelling and neuromorphic hardware development. As such, societal implications could eventually encompass clinical neuroscientific interventions or reductions in the energy consumption of deep learning methods, which we believe are largely positive. However, the commodification and weaponization of AI technology is already having negative impacts on society (e.g. use of facial recognition for discrimination, or the accentuating of wealth inequality and job losses). The authors are doing their best to stay abreast of these problems, and take steps to help combat them. For example, to combat commodification and centralization of wealth by making work publicly available.

\end{document}